\def\eqref#1{equation~\ref{#1}}
\def\1{\bm{1}}
\DeclareMathAlphabet{\mathsfit}{\encodingdefault}{\sfdefault}{m}{sl}
\SetMathAlphabet{\mathsfit}{bold}{\encodingdefault}{\sfdefault}{bx}{n}
\def\gA{{\mathcal{A}}}
\def\gB{{\mathcal{B}}}
\def\gJ{{\mathcal{J}}}
\def\gL{{\mathcal{L}}}
\def\gN{{\mathcal{N}}}
\def\gP{{\mathcal{P}}}
\def\gR{{\mathcal{R}}}
\def\gS{{\mathcal{S}}}
\def\gZ{{\mathcal{Z}}}
\newcommand{\E}[2]{\mathbb{E}_{#1} \left[#2\right]}
\theoremstyle{definition}
\newtheorem{dfn}{Definition}
\newtheorem{lem}{Lemma}
\newtheorem{thm}{Theorem}
\title{Bisimulation metric for Model Predictive Control}
\author{Yutaka Shimizu \& Masayoshi Tomizuka \\
Mechanical Engineering \\
University of California, Berkeley \\
Berkeley, CA 94720, USA \\
\texttt{\{purewater0901, tomizuka\}@berkeley.edu} \\
}
\begin{document}

\maketitle

\begin{abstract}
Model-based reinforcement learning has shown promise for improving sample efficiency and decision-making in complex environments. However, existing methods face challenges in training stability, robustness to noise, and computational efficiency. In this paper, we propose Bisimulation Metric for Model Predictive Control (BS-MPC), a novel approach that incorporates bisimulation metric loss in its objective function to directly optimize the encoder. This time-step-wise direct optimization enables the learned encoder to extract intrinsic information from the original state space while discarding irrelevant details and preventing the gradients and errors from diverging. BS-MPC improves training stability, robustness against input noise, and computational efficiency by reducing training time. We evaluate BS-MPC on both continuous control and image-based tasks from the DeepMind Control Suite, demonstrating superior performance and robustness compared to state-of-the-art baseline methods.
Our code is available through \url{https://github.com/purewater0901/BSMPC}.
\end{abstract}

\section{Introduction} \label{sec:introduction}
Reinforcement learning (RL) has become a central framework for solving complex sequential decision-making problems in diverse fields such as robotics, autonomous driving, and game playing. Among RL methods, model-based reinforcement learning (MBRL) gets its attention thanks to its ability to achieve higher sample efficiency and better generalization. Representation learning further enhances MBRL by encoding high-dimensional information into compact latent spaces, which can accelerate learning by focusing on essential aspects of the environment. However, achieving stable and robust representations remains a challenge, especially in high-dimensional or partially observable environments, where noise and irrelevant features can degrade performance.

One prominent MBRL method, Temporal Difference Model Predictive Control (TD-MPC)~\citep{tdmpc}, combines temporal difference learning with model predictive control to improve policy performance by simulating future trajectories in the learned latent space. TD-MPC sets itself apart from other methods by leveraging the learned latent value function as a long-term reward estimate to approximate cumulative rewards, allowing for the efficient computation of optimal actions.
Despite its successes, TD-MPC suffers from several limitations, including instability during training, vulnerability to noise, and expensive computational costs, which are shown in Fig~\ref{fig:tdmpc-open-problems}. 
The first graph illustrates TD-MPC's performance degradation during training, demonstrating a notable collapse after a certain number of steps. The second set of results focuses on an image-based task, where the addition of background noise (adding a completely irrelevant image to the background) led to TD-MPC's failure to achieve a high reward in the noisy environment. The third picture shows that TD-MPC suffers from a long calculation time.
These problems are attributed to the encoder's training method and the objective function's structure.

To address these issues, we introduce Bisimulation Metric for Model Predictive Control (BS-MPC), a new approach that leverages $\pi^*$-bisimulation metric (on-policy bisimulation metric)~\citep{zhang2021learning} to improve the stability and robustness of latent space representations. Bisimulation metrics measure behavioral equivalence between states by comparing their immediate rewards and next state distributions, providing a formal way to ensure that the learned latent representations retain meaningful and essential information from the original states. 
In BS-MPC, we minimize the mean square error between the on-policy bisimulation metric and $\ell_1$-distance in latent space at each time step, directly optimizing the encoder to improve stability and noise resistance. 
Integration of the bisimulation metric gives BS-MPC a theoretical guarantee, ensuring that the difference in cumulative rewards between the original state space and the learned latent space can be upper-bounded over a trajectory. This value function difference bound validates the fidelity of the encoder projection.
Additionally, the proposed method reduces training time by making the computation of the objective function parallelizable, leading to less computational cost than TD-MPC.
All these performance improvements are also summarized in Fig~\ref{fig:tdmpc-open-problems}.

We implement BS-MPC using the Model Predictive Path Integral (MPPI)~\citep{mppi1, mppi2} framework and evaluate its performance on a variety of continuous control tasks from DeepMind Control Suite~\citep{DMControl}. Our results show that BS-MPC outperforms existing model-free and model-based methods in terms of performance and robustness, making it a promising new approach for model-based reinforcement learning.

\begin{figure}[ht]
  \centering
  \includegraphics[width=1.0\textwidth]{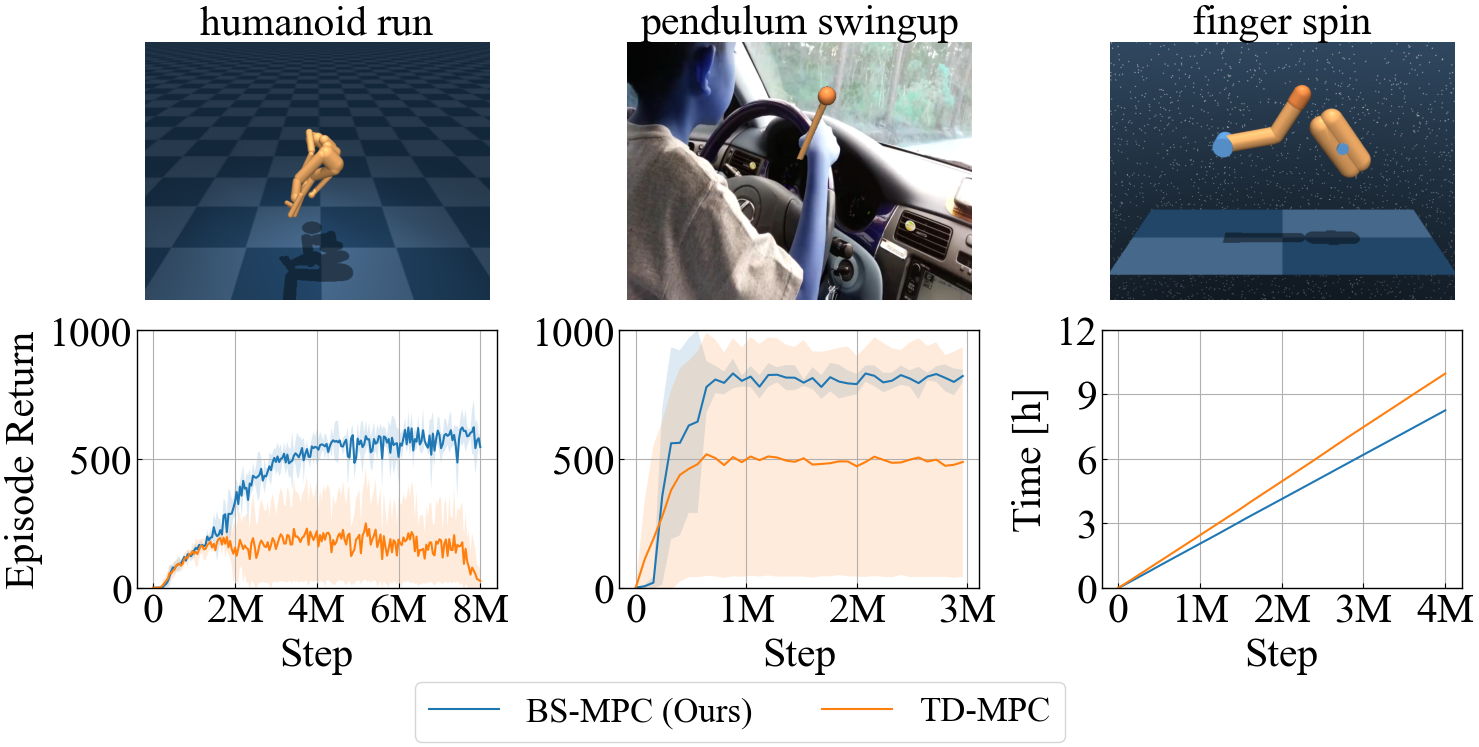}
  \caption{Three open problems of TD-MPC. (Left) TD-MPC initially performs well but collapses after 4 million steps, while BS-MPC steadily improves. (Middle) With added distraction in the input image, TD-MPC fails to gain rewards, whereas BS-MPC remains robust. (Right) BS-MPC reduces training time by removing sequential computation in objective function.}
  \label{fig:tdmpc-open-problems}
\end{figure}

\section{Related Work} \label{sec:related-work}
\paragraph{Reinforcement Learning} Reinforcement Learning (RL) \citep{SuttonRL} has two main approaches: model-free methods \citep{DDPG, TD3, sac, trpo, ppo, mtopt2021arxiv, kalashnikov2018scalable, DQN, rainbow, drq, drq-v2, curl} and model-based methods \citep{dyna, dreamer, dreamerv2, dreamer-v3, luo2018algorithmic, Janner2019When, Chua2018Deep, Schrittwieser2019MasteringAG, Wang2020Exploring}. 
While model-free methods focus on learning the value function and policy, model-based methods aim to learn the underlying model of the environment, using this learned model to compute optimal actions.
This paper focuses on the model-based approach, specifically methods that combine planning and MBRL  \citep{tdmpc, tdmpc2}, which learns the underlying model in the latent space and applies sampling-based Model Predictive Control (MPC) \citep{mppi1, cemplanner} to solve the trajectory optimization problem. Several variants of TD-MPC have been proposed \citep{lancaster2023modemv2, Zhao2023Simplified, Chitnis2023IQLTDMPCIQ, feng2023finetuning, wan2024differentiable}, but none fully address all the challenges outlined in Fig.~\ref{fig:tdmpc-open-problems}.
To the best of our knowledge, BS-MPC is the first approach to tackle all three open problems in TD-MPC, as discussed in Section~\ref{sec:introduction}.

\paragraph{Representation Learning} Learning models in the latent space is an efficient way to approximate internal models, especially for image-based tasks. One approach to learning latent space projections is by training both the encoder and decoder to minimize the reconstruction loss \citep{Lange2010DeepAN, Lange2012Autonomous, planet, dreamer-v3, Lee2020Stochastic}. However, this method often suffers model errors and instability and has difficulties in long-term predictions. An alternative approach is to train only the encoder to obtain the latent representation. Bisimulation \citep{Larsen1989Bisimulation} is a state abstraction technique defined in Markov Decision Processes (MDPs) that clusters states producing identical reward sequences for any given action sequence. 
\citet{Ferns2011Bisimulation, Ferns2014Bisimulation} defined a bisimulation metric that measures the similarity between two states based on the Wasserstein distance between their empirically measured transition distributions. However, computing this metric can be computationally expensive in high-dimensional spaces. To address this, \citet{Pablo2020Scalable} proposed an on-policy bisimulation metric, which considers the distribution of future states under the current policy, providing a scalable way to measure state similarity. \citet{zhang2021learning} extend this idea to $\pi^*$-bisimulation metric by minimizing 
MSE loss between bisimulation metric and $\ell_1$-distance in latent space to train the encoder. Following this approach, we use the on-policy bisimulation metric to train the encoder in model-based reinforcement learning architecture.

\section{Preliminaries} \label{sec:preliminaries}
This section provides a brief introduction to reinforcement learning and its associated notations, along with an explanation of bisimulation concepts.

\subsection{Reinforcement Learning} \label{subsec:rl-preliminaries}

Reinforcement Learning (RL) aims to optimize agents that interact with a Markov Decision Process (MDP) defined by a tuple $(\gS, \gA, \mathcal{P}, \gR, \rho_0, \gamma)$, where $\gS$ represents the set of all possible states, $\gA$ is the set of possible actions,  $\gR$ is a reward function, $\rho_0$ is the initial state distribution, and $\gamma$ is the discount factor. When action $a \in \gA$ is executed at state $s \in \gS$, the next state is generated according to $s' \sim \mathcal{P}(\cdot | s, a)$, and the agent receives stochastic reward with mean $r(s, a) \in \mathbb{R}$.

The Q-function $Q^\pi(s, a)$ for a policy $\pi(\cdot | s)$ represents the discounted long-term reward attained by executing $a$ given observation history $s$ and then following policy $\pi$ thereafter. 
$Q^\pi$ satisfies the Bellman recurrence: 
$Q^\pi(s, a) = \mathbb{B}^\pi Q^\pi(s, a) = r(s, a) + \gamma \E{s' \sim P(\cdot|s, a), a' \sim \pi(\cdot|s')}{Q_{h+1}(s', a')}\,.
$
The value function $V^\pi$ considers the expectation of the Q-function over the policy $V^\pi(h) = \E{a \sim \pi(\cdot | s)}{Q^\pi(s, a)}$. Meanwhile, the Q-function of the optimal policy $Q^*$ satisfies: $Q^*(s, a) = r(s, a) + \gamma \E{s' \sim P(\cdot|s, a)}{\max_{a'} Q^*(s', a')}$, and the optimal value function is $V^*(s) = \max_{a} Q^*(s, a)$. Finally, the expected cumulative reward is given by $J(\pi) = \E{s_1 \sim \rho_1}{V^\pi(s_1)}$.
The goal of RL is to optimize a policy $\pi(\cdot \mid s)$ that maximizes the cumulative reward 
$\pi^*(\cdot \mid s) = \underset{\pi}{\operatorname{argmax}} \; J(\pi)$.

In large-scale or continuous environments, solving reinforcement learning (RL) problems can be challenging due to the prohibitively high computational cost. To address this issue, function approximation is often employed to estimate value functions and policies. 
With function approximation, we present $Q^\pi, \pi$ as $Q_{\theta^Q}, \pi_{\psi}$, with $\theta^Q$ and $\psi$ as their parameters. With a replay buffer $\mathcal{\gB}$, the policy evaluation and improvement steps at iteration $k$ can be expressed as:
\begin{equation} \label{eq:ct-bellman-update}
\begin{split}
    \theta^{Q}_{k+1} \leftarrow &\underset{\theta^Q}{\operatorname{argmin}}\; \mathbb{E}_{(s, a, r, s') \sim \mathcal{\gB}} \left[ \left(Q_{\theta^Q}(s,a) - \gR(s,a) - \gamma \mathbb{E}_{a'\sim \pi_{\theta^\pi_k}(\cdot|s)}[Q_{\bar{\theta}^Q_k}(s,a')] \right)^2 \right] \\
    \psi_{k+1} \leftarrow  &\underset{\psi}{\operatorname{argmax}} \; \mathbb{E}_{s\sim \mathcal{\gB}, a\sim \pi_{\psi}(\cdot |s)} \left[ Q_{\theta^Q_{k+1}}(s,a) \right]\,,
\end{split}
\end{equation}
where $\bar{\theta}^Q_k$ are target parameters that are a slow-moving copy of $\theta^Q_k$. Note that in this paper, we denote $\bar{\bullet}$ as target parameters in this paper.

\subsection{Bisimulation Metric}
When working with high-dimensional state problems, it is often helpful to group "similar" states into the same set. Bisimulation is a type of state abstraction that groups state $s_i$ and $s_j$ if they are "behaviorally equivalent"~\citep{Lihong2006Towards}. A more concise definition states that two states are bisimilar if they yield the same immediate rewards and have equivalent distributions over future bisimilar states\citep{Larsen1989Bisimulation, Robert2003Equivalence}. Bisimulation metric quantifies the bisimilarity of two states $s_i$ and $s_j$. It is defined with $p$-th Wasserstein metric $W_p(\mathcal{P}_1, \mathcal{P}_2)$, which represents the distance between two probability distribution $\mathcal{P}_1$ and $\mathcal{P}_2$:

\begin{dfn} \label{dfn:bisimulation-metric}
(Bisimulation metric \citep{Ferns2011Bisimulation}). The following metric exists and is unique, given $R: S \times A \rightarrow [0, 1]$ and $c \in (0, 1)$ for continuous MDPs:
\begin{equation} \label{eq:bisimulation-metric}
d(s_i, s_j) = \max_{a \in \mathcal{A}} (1 - c) |\gR(s_i, a) - \gR(s_j, a)| + c W_1(\mathcal{P}(\cdot|s_i, a), \mathcal{P}(\cdot|s_j, a)).
\end{equation}
\end{dfn}

In high-dimensional and continuous environments, analytically computing the max operation in Eq.~\ref{eq:bisimulation-metric} is challenging. In response to this difficulty, \citet{Pablo2020Scalable} proposed a new approach, known as the on-policy bisimulation metric (or $\pi$-bisimulation).
\begin{dfn} \label{dfn:on-policy-bisimulation-metric}
   (On-Policy bisimulation metric \citep{Pablo2020Scalable}). Given a fixed policy $\pi$, the following bisimulation metric uniquely exists  
    \begin{equation} \label{eq:on-policy-bisimulation-metric}
        d(s_i, s_j) = |r_{s_i}^\pi - r_{s_j}^\pi| + \gamma W_1(\mathcal{P}^{\pi}(\cdot | s_i), \mathcal{P}^{\pi}(\cdot | s_j)).
    \end{equation}
    where 
    \begin{equation}
            r_s^\pi = \sum_a \pi(a|s) \gR(s, a), \quad \mathcal{P}^{\pi}(\cdot | s) = \sum_a \pi(a|s) \sum_{s'\in S}\mathcal{P}(s' | s, a) 
    \end{equation}
\end{dfn}

Recently, \citep{zhang2021learning} extended the concept of the on-policy bisimulation metric (referred to as the $\pi^*$-bisimulation metric) to learn a comparable metric in the latent space $\gZ$. In their approach, the encoder $\phi$ is learned by minimizing the mean square error between the on-policy bisimulation metric and $\ell_1$-distance in the latent space.
\begin{equation} \label{eq:on-policy-bs-learning}
    J(\phi) = \left( \|\phi(s_i) - \phi(s_j)\|_1 - |r_{s_i}^{\pi} - r_{s_j}^{\pi}| - \gamma W_2\left(\hat{\mathcal{P}}(\cdot|\bar{\phi}(s_i), a_i), \hat{\mathcal{P}}(\cdot|\bar{\phi}(s_j), a_j)\right) \right)^2
\end{equation}
where the latent dynamics model $\hat{\mathcal{P}}$ is modeled with a Gaussian distribution. In Eq.~\ref{eq:on-policy-bs-learning},  2-Wasserstein metric $W_2$ is used because it has a convenient closed form for Gaussian distribution.
Following this approach, we train our encoder similarly by including this MSE loss (Eq.~\ref{eq:on-policy-bs-learning}) in our objective function.

\section{Bisimulation metric for model predictive control}
We introduce Bisimulation Metric Model Predictive Control (BS-MPC), a robust and efficient model-based reinforcement learning algorithm. This section provides a detailed explanation of the BS-MPC algorithm. Furthermore, we present a theoretical analysis that bounds the suboptimality of cumulative rewards in the learned latent space under our architecture. Finally, we highlight three key distinctions between BS-MPC and TD-MPC that contribute to their performance differences.

\subsection{Bisimulation Metric for Model Predictive Control}
We introduce BS-MPC, an improvement over TD-MPC that employs $\pi^*$-bisimulation metrics to train the encoder. The training flow for BS-MPC is detailed in Appendix~\ref{appendix:bsmpc-training-flow}.

\paragraph{Components}
BS-MPC shares the same five core components as TD-MPC: encoder, latent dynamics, reward, state-action value and policy.
\begin{equation} \label{eq:bsmpc-components}
\begin{split}
    \textbf{Encoder:} & \quad z_k = h_{\theta^h}(s_k) \qquad \hspace{0.75cm} \textbf{Latent dynamics:} \quad z_{k+1} = d_{\theta^d}(z_k, a_k)\\
    \textbf{Reward:} &\quad \hat{r}_k = \gR_{\theta^R}(z_k, a_k) \qquad \textbf{State-action value:} \quad \hat{Q}_k = Q_{\theta^Q}(z_k, a_k) \\
    \textbf{Policy:} & \quad \hat{a}_k \sim \pi_{\psi}(z_k)
\end{split}
\end{equation}
When the input $s_k$ is a state vector, the encoder is modeled as a multi-layer perceptron (MLP) and as a convolutional neural network (CNN) when $s_k$ is an image.
Given the latent state $z_k$ and action $a_k$, we compute the next latent state $z_{k+1}$ using the latent dynamics model $d_{\theta^d}(z_k, a_k)$, parameterized by $\theta^d$. 
In BS-MPC, the latent dynamics are modeled using an MLP.
We also model the latent dynamics model with an MLP. BS-MPC estimates the reward $\hat{r}_k$ and state-action value $\hat{Q}_k$ based on $z_k$ and $a_k$, modeling both $\gR_{\theta^\gR}$ and $Q_{\theta^Q}$ with MLPs. Finally, we train a policy that outputs the estimated optimal action $\hat{a}_k$ given $z_k$; the policy is also parameterized as an MLP.

At each time step $k$, the original observation $s_k$ is encoded into the latent state $z_k$. Using $z_k$ and the action $a_k$, we compute the rewards, state-action values, and the next latent state. As highlighted in prior work, these values are computed in the latent space rather than the original observation space, as the latent state $z_k$ captures the essential information from the high-dimensional original state.  Since the latent space typically has more compact dimensions, this approach is commonly used in image-based tasks where input images are high-dimensional. However, state-based tasks, despite being represented more compactly, also benefit from this structure by utilizing latent states learned through temporal consistency \citep{Zhao2023Simplified}.

\paragraph{Objective function}
We jointly train the encoder, latent dynamics model, reward model, and state-action value model. BS-MPC minimizes the following loss function:
\begin{equation} \label{eq:bs-mpc-loss-function}
    \theta^* = \arg\min_{\theta} \gL(\theta) = \arg\min_{\theta} \; \mathbb{E}_{(s, a, r, s') \sim \mathcal{\gB}} \left[ \sum_{k=0}^{H} \lambda^k  \gL_k(\theta) \right]
\end{equation}
where $\theta = [\theta^\gR, \theta^Q, \theta^d, \theta^h]$. This objective function is identical to the one proposed in TD-MPC.
However, BS-MPC has an additional bisimulation metric loss term at every time step, as shown in Eq.~\ref{eq:on-policy-bs-learning}.
\begin{equation} \label{eq:bsmpc-model-loss-function}
\begin{split}
    &\gL_k(\theta) =  c_1  \underbrace{|| \gR_{\theta^\gR}(z_k, a_k) - r_k ||^2_2}_{\text{(A) reward loss}}
    + c_2 \underbrace{\left\| Q_{\theta^Q}(z_k, a_k) - \left( r_k + \gamma Q_{\bar{\theta}^Q}(z_{k+1}, \pi_{\theta^{\pi}}(z_{k+1})) \right) \right\|_2^2}_{\text{(B) state-action value loss}} \\
    & + c_3 \underbrace{\left\| d_{\theta^d}(z_k, a_k) - h_{\bar{\theta}^h}(s_{k+1}) \right\|_2^2}_{\text{(C) consistency loss}}
    + c_4  \underbrace{\left( \|h_{\theta^h}(s_k) - h_{\theta^h}(\acute{s}_k) \|_1 - |r_k - \acute{r}_k| - \gamma ||\bar{z}_{k+1} - \bar{\acute{z}}_{k+1}||_2^2 \right)^2}_{\text{(D) bisimulation metric loss}}
\end{split}
\end{equation}
where $\acute{\bullet}_k = \mathrm{permute}(\bullet_k)$ and $\bar{z}_{k+1} = d_{\bar{\theta}^d}(z_k, a_k)$. $c_1, c_2, c_3, c_4$ are parameters that can change the weight of each loss.
The last term is an expansion of Eq.~\ref{eq:on-policy-bs-learning}, under the assumption that the model outputs deterministic predictions, corresponding to a Dirac delta distribution (i.e., a Gaussian distribution with zero variance). 
As in TD-MPC, we use the same three loss components: (A) reward loss, (B) state-value action loss, and (C) consistency loss. Each training loss aims to update its corresponding parameters, i.e. reward parameter $\theta^{\gR}$, state-action value parameter $\theta^Q$, and dynamics parameter $\theta^d$. These losses also help to update the encoder parameter $\theta^h$ by using the derivative of the composition function. In addition to these losses, BS-MPC includes a bisimulation metric loss in its objective function, which explicitly depends on the encoder parameters $\theta^h$. This bisimulation metric loss (D) is designed to train the encoder to learn a representation space where the $\ell_1$-distance corresponds to the $\pi^*$-bisimulation metric.

For policy training, we use the following loss function to update the policy parameter $\psi$.
\begin{equation} \label{eq:bs-mpc-policy-loss}
    \psi^* = \arg \min_{\psi} \gJ_{\pi}(\psi) = \arg \min_{\psi} \; -\sum_{k=0}^{H} \lambda^k Q_{\theta^Q}\left(z_k, \pi_{\psi}(\bar{z}_k)\right)
\end{equation}

In Section.~\ref{sec:bsmpc_tdmpc_difference}, we give details about the benefit of using Eq.~\ref{eq:bsmpc-model-loss-function} as the objective function for BS-MPC and the differences between our approach and TD-MPC.

\paragraph{Model Predictive Control with Learned Model}
Following TD-MPC, our method has a closed-loop controller using the learned latent dynamics model, reward model, state-value function, and prior policy to compute the optimal action. 
Due to the high affinity between reinforcement learning and sampling-based planners, we design a closed-loop controller using MPPI, a type of sampling-based MPC, following the approach of TD-MPC. MPPI is a derivative-free method that samples a large number of trajectories, calculates the weight for each, and then generates the optimal trajectory by taking the weighted average of these trajectories. This framework enables us to solve the local trajectory optimization problem.

First, it encodes the current observed state $s_t$ into the latent space with the trained encoder $z_t = h_{\theta^h}(s_t)$.
After that, we sample $M$ action sets from Gaussian distribution $\gN(\mu^0, \sigma^0)$ based on the initial mean$\mu^0$ and standard deviation $\sigma^0$, and each set contains $H$ length actions $a^j_{t:t+H}$ where $j \in M$.
Starting from the initial latent state $z_t$, we use the learned latent dynamics $z_{t+1} = d_{\theta^d}(z_t, a_t)$ and sample $M$ trajectories. 
We then calculate the weight of each trajectory based on its cost and compute the weighted mean of the sampled trajectories to get the updated optimal trajectory.
The parameter $\mu_k$ and $\sigma_k$ is updated by maximizing the following equations:
\begin{equation} \label{eq:mppi-update-equation}
    \mu^{k+1}, \sigma^{k+1} = \arg\max_{(\mu, \sigma)} \mathbb{E}_{(a_t, a_{t+1}, \dots, a_{t+H}) \sim \mathcal{N}(\mu, \sigma^2)} \left[ \gamma^H V(z_{t+H}) + \sum_{h=t}^{H-1} \gamma^h R_{\theta^R}(z_h, a_h) \right]
\end{equation}
where $V(z_{t+H}) = Q_{\theta^Q}\left( z_{t+H}, \pi_{\psi}(z_{t+h}) \right)$.
We continue this calculation until it reaches the given number of iterations. More details can be found in \citep{tdmpc, mppi1, mppi2}.

\subsection{Theoretical Analysis} \label{sec:bsmpc-theoretical-analysis}
It is important to measure the quality of the learned representation space.
In this section, we show that BS-MPC upper-bounds expected cumulative reward by leveraging value function bounds derived from the on-policy bisimulation metric. This property, absent in TD-MPC, strongly differentiates BS-MPC.

We assume that the learned policy in BS-MPC continuously improves throughout training and eventually converges to the optimal policy $\pi^*$, which supports Theorem~\ref{thm:fixed-point-on-policy-bisimulation}.
\begin{thm} (Theorem 1 in \citep{zhang2021learning}) \label{thm:fixed-point-on-policy-bisimulation}
    Let's assume a policy $\pi$ in BS-MPC continuously improves over time, converging to the optimal policy $\pi^*$. Under this assumption, the following bisimulation metric has a least fixed point $\tilde{d}$ and that is a $\pi^*$-bisimulation metric.
    \begin{equation} \label{eq:theory-bisimulation-metric}
        d(s_i, s_j) = (1-c) |r_{s_i}^\pi - r_{s_j}^\pi| + c W_p(d)(\mathcal{P}^{\pi}(\cdot | s_i), \mathcal{P}^{\pi}(\cdot | s_j)).
    \end{equation}
    where $W_p(d)(\mathcal{P}_i, \mathcal{P}_j) = \left( \inf_{\gamma' \in \Gamma(\mathcal{P}_i, \mathcal{P}_j)} \int_{\mathcal{S} \times \mathcal{S}} d(s_i, s_j)^p \, d\gamma'(s_i, s_j) \right)^{1/p}$ and $\Gamma(\mathcal{P}_i, \mathcal{P}_j)$ is the set of all couplings of $\gP_i$ and $\gP_j$.
\end{thm}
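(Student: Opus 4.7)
The plan is to apply the Banach fixed-point theorem to establish existence and uniqueness of $\tilde{d}$, and then invoke the hypothesis that the learned policy converges to $\pi^*$ in order to identify this fixed point with the $\pi^*$-bisimulation metric.

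First I would cast the recursion as a fixed-point equation. For a fixed policy $\pi$, define the operator $\mathcal{F}_\pi : \sM \to \sM$ on the space $\sM$ of bounded pseudometrics on $\gS$ by
$$\mathcal{F}_\pi(d)(s_i, s_j) = (1-c)\,|r^\pi_{s_i} - r^\pi_{s_j}| + c\, W_p(d)\bigl(\mathcal{P}^\pi(\cdot|s_i),\, \mathcal{P}^\pi(\cdot|s_j)\bigr).$$
It is standard that $\sM$ equipped with the uniform norm $\|d\|_\infty = \sup_{s_i,s_j} d(s_i,s_j)$ is a complete metric space, and a routine check confirms $\mathcal{F}_\pi$ maps $\sM$ into itself: symmetry, $d(s,s)=0$, the triangle inequality, and the uniform bound $\|\mathcal{F}_\pi(d)\|_\infty \leq (1-c)\|r^\pi\|_\infty + c\|d\|_\infty$ all transfer from the corresponding properties of $|\cdot|$ and of $W_p$, so there is an invariant ball of radius $R_{\max}/(1-c)$.

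The central step is to prove $\mathcal{F}_\pi$ is a $c$-contraction in $\|\cdot\|_\infty$. The reward terms cancel when comparing $\mathcal{F}_\pi(d)$ and $\mathcal{F}_\pi(d')$, so it reduces to bounding $|W_p(d)(\mu,\nu) - W_p(d')(\mu,\nu)|$ by $\|d - d'\|_\infty$ uniformly in $(\mu,\nu)$. For $p = 1$, taking any coupling $\gamma'$ and using linearity of the integral yields $\int d\, d\gamma' \leq \int d'\, d\gamma' + \|d - d'\|_\infty$, and passing to the infimum gives the bound immediately. For $p > 1$ one uses the pointwise inequality $|d^p - (d')^p| \leq p D_{\max}^{p-1}|d - d'|$, where $D_{\max}$ is the a priori uniform bound on the iterates from the invariant-ball argument, then extracts the $p$-th root on the interval where $x \mapsto x^{1/p}$ is Lipschitz. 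Banach's theorem then delivers a unique fixed point $\tilde{d}_\pi$ of $\mathcal{F}_\pi$, and successive approximation from the zero pseudometric shows it is the least fixed point.

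To conclude that $\tilde{d}$ is a $\pi^*$-bisimulation metric, I would use the policy-convergence hypothesis $\pi \to \pi^*$: since $r^\pi$ and $\mathcal{P}^\pi$ depend continuously on $\pi$, the operators $\mathcal{F}_\pi$ form a continuous family of uniform $c$-contractions, and a standard stability result for parametrized contractions yields $\tilde{d}_\pi \to \tilde{d}_{\pi^*}$ in $\|\cdot\|_\infty$; the limit satisfies the fixed-point equation with $\pi = \pi^*$ by continuity, which is precisely the defining recursion of the $\pi^*$-bisimulation metric. The main obstacle I anticipate is the contraction estimate for general $p > 1$: unlike the $W_1$ case, the Lipschitz bound does not follow directly from Kantorovich--Rubinstein duality, and one must rely on the a priori boundedness of the iterates together with the invariant ball so that the $p$-th-root extraction does not destroy the Lipschitz property and the contraction constant remains $c$.
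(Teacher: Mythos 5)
The paper does not prove this statement itself; it is quoted as Theorem~1 of \citet{zhang2021learning} and the proof is deferred entirely to that reference. Your reconstruction follows the same route as the cited source: cast the recursion as an operator $\mathcal{F}_\pi$ on the complete space of bounded pseudometrics under the sup norm, show it is a $c$-contraction because the reward terms cancel and $d \mapsto W_p(d)(\mu,\nu)$ is $1$-Lipschitz in $\|\cdot\|_\infty$, invoke Banach (uniqueness makes ``least'' automatic), and then pass to $\pi^*$ via continuous dependence of the fixed point on the policy. That outline is sound and is essentially the argument the paper is relying on.

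The one step that would fail as written is your contraction estimate for $p>1$. The map $x \mapsto x^{1/p}$ is not Lipschitz near $0$, so you cannot in general extract the $p$-th root after bounding $|d^p - (d')^p| \le p D_{\max}^{p-1}|d-d'|$; and even where you can, the resulting constant depends on $D_{\max}$ and need not leave the contraction modulus equal to $c$. The clean fix is Minkowski's inequality in $L^p(\gamma')$: for any coupling $\gamma'$,
\begin{equation*}
\Bigl(\int d^p \, d\gamma'\Bigr)^{1/p} \le \Bigl(\int \bigl(d' + \|d-d'\|_\infty\bigr)^p \, d\gamma'\Bigr)^{1/p} \le \Bigl(\int (d')^p \, d\gamma'\Bigr)^{1/p} + \|d-d'\|_\infty,
\end{equation*}
and taking infima over couplings gives $|W_p(d)(\mu,\nu) - W_p(d')(\mu,\nu)| \le \|d-d'\|_\infty$ with constant exactly $1$, uniformly in $(\mu,\nu)$ and without any a priori bound on the iterates. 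With that substitution your argument goes through; the remaining caveat is that the stability-of-fixed-points step formalizes the theorem's informal hypothesis that the policy ``continuously improves and converges to $\pi^*$,'' which is stated loosely in both the paper and the original reference, so you should make explicit what continuity of $\pi \mapsto (r^\pi, \mathcal{P}^\pi)$ you are assuming.
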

Proof can be found in \citep{zhang2021learning}.
Under this $\pi^*$-bisimulation metric, we can divide the latent space into $n$ partitions based on some $\epsilon > 0$, where $\frac{1}{n} < (1 - c)\epsilon$. Let $\phi$ represent the encoder that maps each original state from the state space $\mathcal{S}$ to a corresponding $\epsilon$-cluster. 
With these notations, \citep{zhang2021learning} shows the following value bound based on bisimulation metrics.
\begin{thm} (Theorem 2 in \citep{zhang2021learning}) \label{thm:optimal-value-bound}
Consider an MDP $\bar{\mathcal{M}}$, which is formed by clustering states within an $\epsilon$-neighborhood, along with an encoder $\phi$ that maps states from the original MDP $\mathcal{M}$ to these clusters. Under the same assumption in Theorem~\ref{thm:fixed-point-on-policy-bisimulation}, optimal value functions for the two MDPs are bounded by
\begin{equation}
|V^*(s) - V^*(\phi(s))| \leq \frac{2\epsilon + 2 \gL}{(1 - \gamma)(1 - c)}
\end{equation}
where $\mathcal{L} := \sup_{s_i, s_j \in \mathcal{S}} | \left\| \phi(s_i) - \phi(s_j) \right\| - \tilde{d}(s_i, s_j)|$ is the learning error for encoder $\phi$. 
Note that this theorem assumes access to the true dynamics model $\gP$ and reward function $\gR$.
\end{thm}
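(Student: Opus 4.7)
The plan is to chain three ingredients: the Lipschitz-type value bound implied by the $\pi^*$-bisimulation metric $\tilde{d}$ of Theorem~\ref{thm:fixed-point-on-policy-bisimulation}, the $\epsilon$-clustering property built into $\phi$, and the encoder learning error $\mathcal{L}$. The factor $1/(1-c)$ will come from the bisimulation-to-value inequality (since the fixed-point equation for $\tilde{d}$ uses the convex combination with weight $c$), while the factor $1/(1-\gamma)$ will appear from a simulation-lemma style amplification when transferring optimal values between $\mathcal{M}$ and $\bar{\mathcal{M}}$.

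First, I would establish the value-Lipschitz property: for any $s_i,s_j \in \mathcal{S}$, $(1-c)\,|V^*(s_i) - V^*(s_j)| \le \tilde{d}(s_i,s_j)$. This follows by unfolding the fixed-point equation for $\tilde{d}$ in Theorem~\ref{thm:fixed-point-on-policy-bisimulation} and applying Kantorovich--Rubinstein duality to the Wasserstein term, exploiting that $V^*$ (rescaled by $1-c$) is $1$-Lipschitz with respect to $\tilde{d}$; the assumption that the running policy converges to $\pi^*$ is what licenses using $V^*$ in place of $V^\pi$ in this step.

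Second, I would translate the encoder's geometric clustering into a bound in the bisimulation metric. By construction of $\bar{\mathcal{M}}$, any two states $s,s'$ mapped to the same cluster satisfy $\|\phi(s)-\phi(s')\| < \epsilon$ (up to the partition granularity $1/n < (1-c)\epsilon$), and the definition of $\mathcal{L}$ yields $\tilde{d}(s,s') \le \|\phi(s)-\phi(s')\| + \mathcal{L} < \epsilon + \mathcal{L}$. Combined with the previous step, intra-cluster states satisfy $|V^*_{\mathcal{M}}(s)-V^*_{\mathcal{M}}(s')| \le (\epsilon + \mathcal{L})/(1-c)$.

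Third, I would link $V^*_{\mathcal{M}}$ to $V^*_{\bar{\mathcal{M}}}(\phi(s))$ via a simulation lemma. Because $\bar{\mathcal{M}}$ is obtained by collapsing each $\epsilon$-cluster into a single abstract state, its one-step reward and transition kernel (inherited from the true $\mathcal{P},\mathcal{R}$ by taking a representative of each cluster) differ from those of $\mathcal{M}$ by errors already controlled in the second step. Picking a representative $\tilde{s}$ with $\phi(\tilde{s})=\phi(s)$ and applying the triangle inequality
$|V^*_{\mathcal{M}}(s) - V^*_{\bar{\mathcal{M}}}(\phi(s))| \le |V^*_{\mathcal{M}}(s) - V^*_{\mathcal{M}}(\tilde{s})| + |V^*_{\mathcal{M}}(\tilde{s}) - V^*_{\bar{\mathcal{M}}}(\phi(s))|$
splits the error into an intra-cluster term (bounded by $(\epsilon+\mathcal{L})/(1-c)$) and an abstraction term, where propagating per-step reward/transition errors through the Bellman operator over the infinite horizon yields the extra $1/(1-\gamma)$. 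Each term contributes $(\epsilon+\mathcal{L})/[(1-\gamma)(1-c)]$ (with the first term only requiring a trivial $1/(1-\gamma)\ge 1$ inflation), which assembled gives exactly the claimed $(2\epsilon+2\mathcal{L})/[(1-\gamma)(1-c)]$.

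The hard part will be the third step: making the construction of $\bar{\mathcal{M}}$'s reward and transition kernel explicit enough that the Wasserstein lift of $\phi$ cleanly cancels with the bisimulation metric on intra-cluster states, so that the $1/(1-c)$ and $1/(1-\gamma)$ factors separate multiplicatively instead of compounding into a single harder-to-interpret constant. Getting the factor of $2$ exactly right—tracked through the representative-state triangle inequality—is delicate but routine once the simulation lemma is set up correctly.
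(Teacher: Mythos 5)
The paper does not actually prove this statement: it is imported verbatim as Theorem~2 of \citet{zhang2021learning}, and the text immediately after it reads ``Proof can also be found in \citep{zhang2021learning}.'' So there is no in-paper argument to compare against; the relevant reference proof is the one in that work, which in turn rests on the state-aggregation theorem of Ferns et al.\ (2004): the aggregated MDP $\bar{\mathcal{M}}$ averages rewards and transitions over each cluster, and a contraction argument on the aggregated Bellman operator bounds $|V^*(s)-V^*(\phi(s))|$ by (cluster $\tilde d$-diameter)$/\bigl((1-\gamma)(1-c)\bigr)$, after which one plugs in a diameter of $2\epsilon+2\mathcal{L}$. Your outline follows essentially this route, so the overall strategy is sound, but two steps have genuine gaps. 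First, the value-Lipschitz inequality $(1-c)\,|V^*(s_i)-V^*(s_j)|\le\tilde d(s_i,s_j)$ is \emph{not} free from Kantorovich duality alone; it holds only under the compatibility condition $\gamma\le c$ between the discount factor and the metric's transition weight (this is the hypothesis under which $(1-c)V^*$ is $1$-Lipschitz w.r.t.\ $\tilde d$ in Ferns et al.), and you never state it. Second, your step~2 is internally inconsistent: if $s,s'$ land in the same cluster then $\phi(s)=\phi(s')$, so your own inequality $\tilde d(s,s')\le\|\phi(s)-\phi(s')\|+\mathcal{L}$ gives $\tilde d(s,s')\le\mathcal{L}$ and the $\epsilon$ disappears. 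The correct bookkeeping is that the clusters are $\epsilon$-neighborhoods measured in the bisimulation metric (or in a learned latent metric that approximates it up to $\mathcal{L}$), so the intra-cluster $\tilde d$-diameter is $2\epsilon+2\mathcal{L}$, which is exactly where the numerator $2\epsilon+2\mathcal{L}$ comes from; your version would instead produce $(\epsilon+\mathcal{L})$ per term only by accident of how you split the triangle inequality. Finally, your step~3 still leaves the entire aggregated-MDP contraction argument (the actual content of Ferns et al.\ Theorem~5.1) as a black box, which you acknowledge but do not supply.
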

Proof can also be found in \citep{zhang2021learning}. This result demonstrates that the optimal value function in the original state space and the optimal value function in the latent space, projected by the $\pi^*$-bisimulation metric, is bounded from above.
Leveraging Theorem~\ref{thm:fixed-point-on-policy-bisimulation} and Theorem~\ref{thm:optimal-value-bound}, we can bound the cumulative reward of a trajectory under the original MDP $\mathcal{M}$ and the latent MDP $\bar{\mathcal{M}}$.

\begin{thm} \label{thm:cumulative-reward-bound}
Consider a trajectory $\tau = (s_0, a_0, s_1, a_1, \dots, s_{H-1}, a_{H-1}, s_H)$ in the original state space $\mathcal{S}$, and its corresponding encoded trajectory $\phi(\tau) = (z_0, a_0, z_1, a_1, \dots, z_{H-1}, a_{H-1}, z_H)$, where $a_k \sim \pi^*(\cdot \mid s_k)$ and $z_k = \phi(s_k)$, with $\phi$ defined as in Theorem~\ref{thm:optimal-value-bound}. Under the same assumption as in Theorem~\ref{thm:fixed-point-on-policy-bisimulation} and Theorem~\ref{thm:optimal-value-bound}, the following expected cumulative rewards
\begin{align*}
\mathbf{S}(\tau) &= \mathbb{E}_{\tau} \left[ \gamma^H V^*(s_{H}) + \sum_{h=0}^{H-1} \gamma^h R(s_h, a_h) \right], \\
\mathbf{S}(\phi(\tau)) &= \mathbb{E}_{\tau} \left[ \gamma^H V^*(\phi(s_{H})) + \sum_{h=0}^{H-1} \gamma^h R(\phi(s_h), a_h) \right]
\end{align*}
can be bounded as follows:
\begin{equation}
\left| \mathbf{S}(\tau) - \mathbf{S}(\phi(\tau)) \right| \leq \frac{2\gamma^H (\epsilon + \mathcal{L})}{(1 - \gamma)(1 - c)} + \frac{2\epsilon (1 - \gamma^{H})}{(1 - \gamma)(1 - c)}.
\end{equation}
\end{thm}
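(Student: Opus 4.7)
The plan is to apply the triangle inequality to the two expectations defining $\mathbf{S}(\tau)$ and $\mathbf{S}(\phi(\tau))$, splitting the difference into (i) a single terminal value mismatch and (ii) a discounted sum of per-step reward mismatches, and then to control each piece with the tools already set up in Theorems~\ref{thm:fixed-point-on-policy-bisimulation} and~\ref{thm:optimal-value-bound}.

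First I would push the absolute value inside the expectation and separate the terminal and stage contributions:
\begin{equation*}
|\mathbf{S}(\tau) - \mathbf{S}(\phi(\tau))|
\leq \gamma^H \, \mathbb{E}_\tau\!\left[|V^*(s_H) - V^*(\phi(s_H))|\right]
+ \sum_{h=0}^{H-1} \gamma^h \, \mathbb{E}_\tau\!\left[|R(s_h, a_h) - R(\phi(s_h), a_h)|\right].
\end{equation*}
Theorem~\ref{thm:optimal-value-bound} applies pointwise to the terminal summand and yields the bound $\gamma^H \cdot \tfrac{2(\epsilon + \mathcal{L})}{(1-\gamma)(1-c)}$, which is exactly the first term of the claimed inequality.

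For the per-step summands, the key observation is that because $\phi$ is the $\epsilon$-cluster map introduced before Theorem~\ref{thm:optimal-value-bound}, any state $s$ and any other state sharing its cluster satisfy $\tilde{d}(s,s') \leq \epsilon$ under the on-policy bisimulation metric. Combining this with the fixed-point identity from Theorem~\ref{thm:fixed-point-on-policy-bisimulation}, which forces $(1-c)|r^{\pi^*}_{s} - r^{\pi^*}_{s'}| \leq \tilde{d}(s,s')$, and specializing to the deterministic optimal policy so that $r^{\pi^*}(s) = R(s,\pi^*(s))$ and $a_h = \pi^*(s_h)$, I would deduce
$|R(s_h, a_h) - R(\phi(s_h), a_h)| \leq \tfrac{2\epsilon}{1-c}$,
where the factor $2$ comes from routing through a cluster representative via the triangle inequality (once from $s_h$ to the representative, once from the representative to $\phi(s_h)$). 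Summing the resulting geometric series, $\sum_{h=0}^{H-1} \gamma^h \cdot \tfrac{2\epsilon}{1-c} = \tfrac{2\epsilon(1-\gamma^H)}{(1-\gamma)(1-c)}$, gives the second summand, and adding the two pieces closes the proof.

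The main obstacle I anticipate is the per-step reward bound, since Theorem~\ref{thm:fixed-point-on-policy-bisimulation} controls the policy-averaged reward $r^\pi$, whereas $\mathbf{S}$ uses the action-specific $R(s,a)$. Bridging this cleanly requires either the determinism of $\pi^*$ (so the policy-averaged and action-specific rewards coincide along the realized trajectory) or a mild Lipschitz assumption on $R$ in the action, together with a precise interpretation of what $R(\phi(s),a)$ means on the aggregated MDP $\bar{\mathcal{M}}$ (typically the reward at a chosen cluster representative), which is also what produces the factor $2$ in the reward bound rather than a bare $\epsilon$.
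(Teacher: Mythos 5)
Your proposal follows essentially the same route as the paper's proof: the same triangle/Jensen decomposition into a terminal value-function mismatch (bounded via Theorem~\ref{thm:optimal-value-bound}) plus a discounted sum of per-step reward mismatches, each bounded by $\tfrac{2\epsilon}{1-c}$ via the fixed-point identity of the $\pi^*$-bisimulation metric, followed by summing the geometric series. If anything you are more careful than the paper's Lemma on the per-step reward bound, since you flag the gap between the policy-averaged reward $r^{\pi}_s$ appearing in the metric and the action-specific $R(s,a)$ used in $\mathbf{S}$, a point the paper's argument silently elides.
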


Proof can be found in Appendix~\ref{appendix:proof}. This theorem states that if the cluster radius $\epsilon$ and the encoder error $\mathcal{L}$ are sufficiently small, the learned representation space $\mathcal{Z}$ does not change the original cumulative rewards over the same trajectory $\tau$. This suggests that the latent space retains essential information from the original space.

\subsection{Difference between BS-MPC and TD-MPC} \label{sec:bsmpc_tdmpc_difference}
The main difference between BS-MPC and TD-MPC lies in its objective function and computation flow. Specifically, BS-MPC updates the encoder parameter by minimizing MSE loss between on-policy bisimulation metric and $\ell_1$-distance in the latent space at every time step $k$, which is shown in Eq.~\ref{eq:bs-mpc-loss-function}. These differences result in the following improvements.

\paragraph{Improved training stability} In TD-MPC, the encoder is only updated indirectly through gradients propagated from the latent dynamics loss, as the objective function lacks explicit encoder loss term and only consists of the first three terms in Eq.~\ref{eq:bsmpc-model-loss-function}. This indirect update makes it challenging to effectively optimize the encoder parameters at each training step, potentially leading to significant inconsistencies in the latent dynamics and destabilizing the learning process. Such model inconsistencies have also been reported in \citet{Zhao2023Simplified}. 
In contrast, BS-MPC has an explicit encoder loss in its objective function thanks to the inclusion of bisimulation metric loss. This allows the gradients of the encoder parameters to be directly computed from the objective function, ensuring continuous improvement of the encoder. As a result, the learned encoder effectively maps the original state $s$ to the latent space $z$, leading to reduced model discrepancies. The encoder update difference is shown in Fig.~\ref{fig:tdmpc-calculation-flow} and Fig.~\ref{fig:bsmpc-calculation-flow}.

\paragraph{Theoretical support of the latent space} The encoder in BS-MPC generates a latent representation $\mathcal{Z}$ where the $\ell_1$-distance corresponds to the bisimulation metric. This indicates that the encoder efficiently filters out irrelevant information from the original state $s$ and preserves intrinsic details in the latent state $z$. Consequently, BS-MPC exhibits robust resilience to noise. Additionally, this property guarantees that the cumulative reward difference between the learned representation space and the original space over a trajectory is upper-bounded, as discussed in Section~\ref{sec:bsmpc-theoretical-analysis}. In contrast, the encoder in TD-MPC lacks theoretical guarantees in its learned representation space, potentially leading to the projection of irrelevant details into the latent space. This absence of theoretical validity in the encoder contributes to increased sensitivity to noise, as shown in our experimental results (Section.~\ref{sec:image-with-distractors}).

\paragraph{Ease of parallelization} TD-MPC predicts the latent state $\hat{z}_{k+1}$ by applying the dynamics model to the previous predicted latent state $\hat{z}_k$, introducing a sequential dependency that hinders parallel computation (see Lines 12 to 17 of Algorithm 2 in \citet{tdmpc}). In contrast, BS-MPC generates the predicted latent state $\hat{z}_{k+1}$ by encoding the current state into the latent state $z_k = h_{\theta^h}(s_k)$ and using it as input to the dynamics model, which removes the sequential dependency and allows for parallel computation across time steps. Fig.~\ref{fig:tdmpc-calculation-flow} and Fig.~\ref{fig:bsmpc-calculation-flow} show the calculation flow difference between TD-MPC and BS-MPC. Consequently, BS-MPC achieves faster computational times compared to TD-MPC. 

\begin{figure}[htbp]
    \begin{tabular}{cc}
      \begin{minipage}[b]{0.47\columnwidth}
        \centering
        \includegraphics[width=\linewidth]{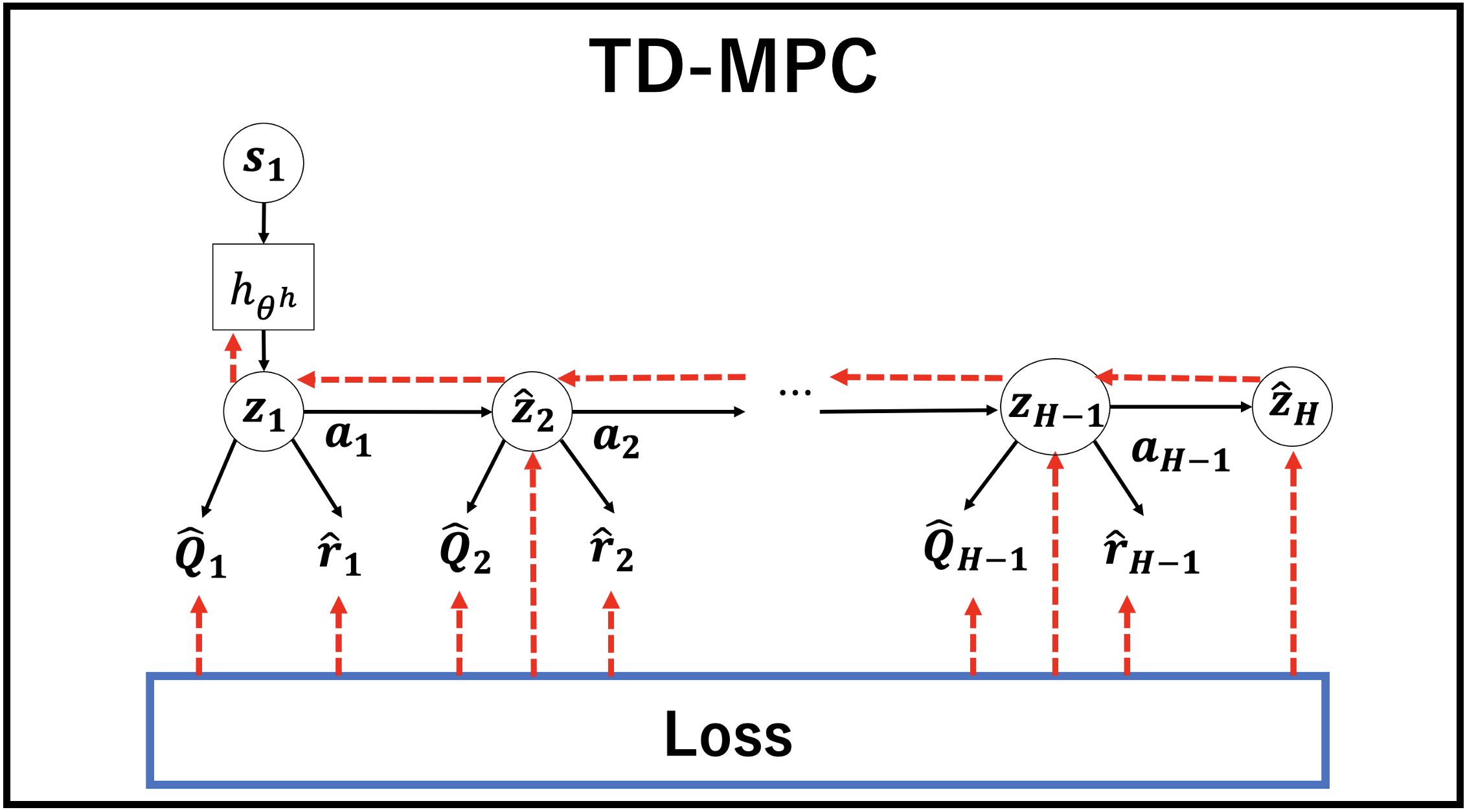}
        \subcaption{TD-MPC calculation flow}
        \label{fig:tdmpc-calculation-flow}
      \end{minipage} &
      \begin{minipage}[b]{0.47\columnwidth}
        \centering
        \includegraphics[width=\linewidth]{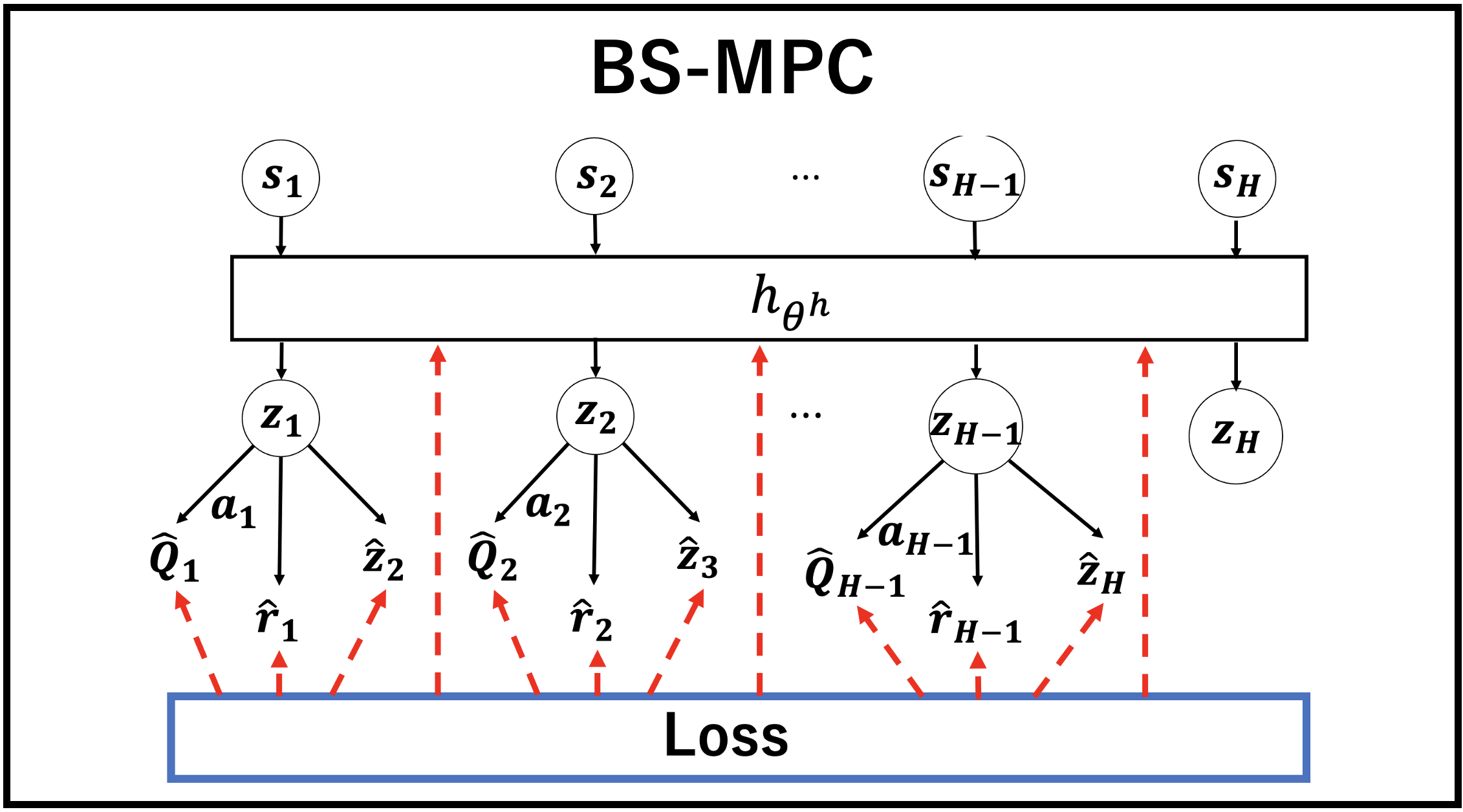}
        \subcaption{BS-MPC calculation flow}
        \label{fig:bsmpc-calculation-flow}
      \end{minipage} 
    \end{tabular}
    \caption{Calculation flow comparison. The black line shows the forward calculation flow, and the red arrows represent the gradient of $\theta^h$. While TD-MPC needs sequential calculation in its forward computational flow, BS-MPC can process all the calculation parallel. Moreover, BS-MPC has explicit encoder loss in its cost function, so its derivative directly updates the parameters of the encoder. Note that TD-MPC only encodes the original observation at the initial time step and predicts latent states by using the latent dynamics model.}
    \label{fig:calculation-comparison}
\end{figure}

\section{Experiments} \label{sec:experiment}
We evaluate BS-MPC across various continuous control tasks using the DeepMind Control Suite (DM Control \citep{DMControl}). The inputs in these experiments include both high-dimensional state vectors and images, with some tasks set in sparse-reward environments. The objective of this section is to demonstrate that BS-MPC maintains its performance over time and remains robust to noise. Additionally, we aim to confirm that it outperforms TD-MPC in terms of computational efficiency. 
In this experiment, we focus specifically on comparing BS-MPC with TD-MPC. \textbf{For a fair comparison, BS-MPC uses the same model architecture as TD-MPC, with an identical number of parameters and tuning parameters set to the same values. We also run BS-MPC and TD-MPC using the same random seeds.} The only difference between BS-MPC and TD-MPC is explicit encoder loss in the objective function with an additional parameter $c_4$. We adopt the same environmental settings as those used in the original TD-MPC paper \citep{tdmpc}. Detailed experimental configurations are provided in Appendix~\ref{appendix:implementation_details}.

For the baselines, we use the model-free RL algorithm SAC~\citep{sac, sac2, sac3}, the model-based RL algorithm Dreamer-v3~\citep{dreamer-v3}, and the planning-based model-based RL approach TD-MPC~\cite{tdmpc}, evaluating them on both state-based and image-based tasks. In addition to these algorithms, we also compare BS-MPC with DrQ-v2~\citep{drq-v2} and CURL~\citep{curl} on image-based tasks. For a more strong baseline comparison,  we compare BS-MPC with TD-MPC2~\citep{tdmpc2} in Appendix~\ref{appendix:tdmpc2}. We publicly release the value of episode return at each time step and code for training BS-MPC agents.

\subsection{Result on state-based tasks}
We evaluate BS-MPC across 26 diverse continuous control tasks with state inputs, comparing its performance to other baseline methods. In this setting, agents have direct access to all internal states of the environment.

Fig.~\ref{fig:result-main-state-input} shows the average performance of each algorithm across 10 tasks, along with the individual scores from 9 specific tasks. 
We ran 10 million time steps for the dog experiment and 8 million time steps for the humanoid experiment. 
For the other tasks, the experiments were run for 4 million time steps. The results demonstrate that BS-MPC consistently outperforms existing model-based and model-free reinforcement learning methods, particularly in high-dimensional environments. 
On complex tasks involving dog and humanoid environments, BS-MPC significantly outperforms TD-MPC, SAC, and Dreamer-v3. In particular, BS-MPC achieves higher episode returns early in training and maintains superior performance throughout, whereas the other methods either plateau or display instability.
TD-MPC performs well in the early stages of training, achieving competitive results up to approximately 1 million steps. However, in many tasks, its performance suddenly collapses after this point, leading to high variance and reduced episode returns. It is important to note that BS-MPC and TD-MPC share the exact same model architecture, hyperparameters, and number of parameters. The only difference between them lies in the cost function: BS-MPC explicitly minimizes the bisimulation metric loss at every time step to train the encoder, whereas TD-MPC only calculates the encoder loss at the initial time step.
Appendix.~\ref{appendix:additional_experimental_results} shows all the results from 26 continuous control tasks, computation time comparison, and detailed analysis of the training failure of TD-MPC.

\begin{figure}[ht]
  \centering
  \includegraphics[width=1.0\textwidth]{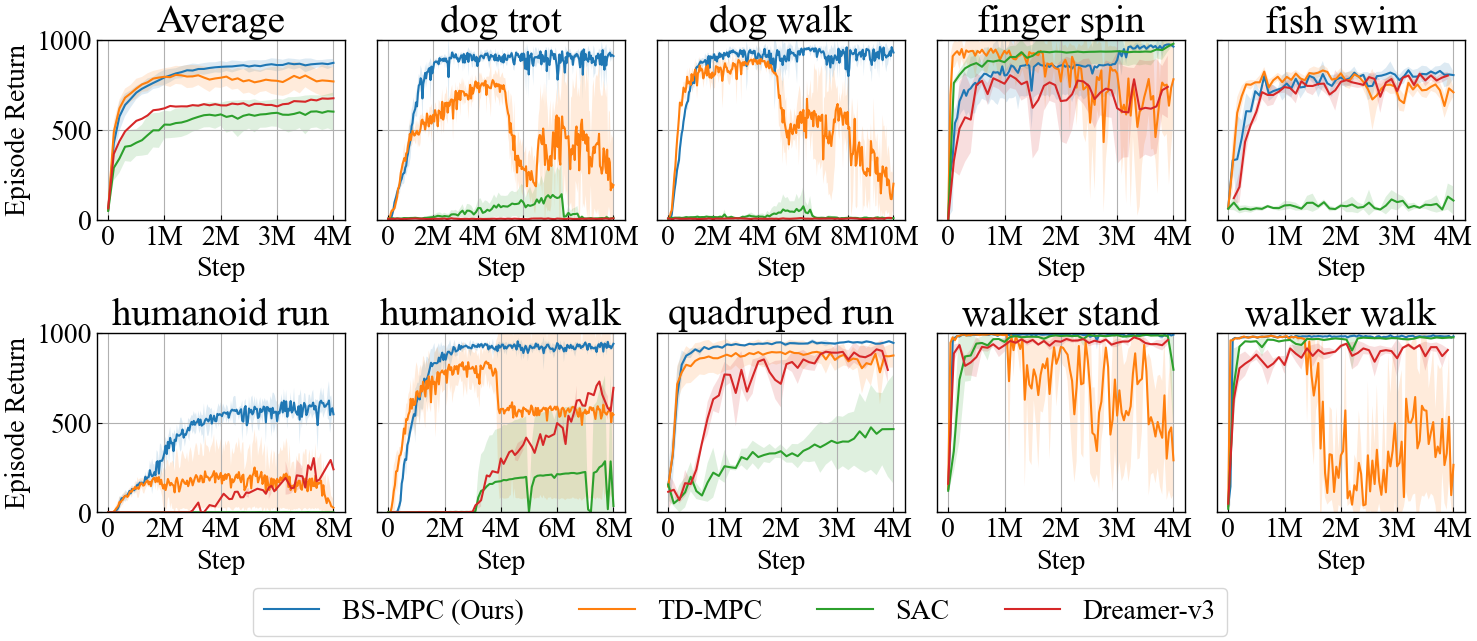}
  \caption{Performance comparison on the average over 26 state-based tasks and 9 DM Control tasks with state input. At each evaluation step, the episode return is computed over 10 episodes. The results are averaged over 3 seeds, with shaded regions representing the standard deviation. Results for SAC and Dreamer-v3 are obtained from \citep{tdmpc2}, and results for TD-MPC are reproduced using their official code with the same architecture and hyperparameters for BS-MPC. We use the same seeds for evaluation.}
  \label{fig:result-main-state-input}
\end{figure}

\subsection{Results on image-based tasks} \label{sec:experiment-image-input}
Next, we evaluate BS-MPC and other baseline methods on image-based tasks from 10 DM Control environments. In these tasks, the encoders of both BS-MPC and TD-MPC are modeled by CNN to project high-dimensional image data into a compact latent space. To ensure a fair comparison, we use the same model architecture, hyperparameters, and number of parameters for both BS-MPC and TD-MPC, and we use identical seeds for evaluation. We run 3 million environmental steps for all tasks.
Fig.~\ref{fig:result-main-image-input} shows the results across 10 image-based tasks. BS-MPC demonstrates performance competitive with TD-MPC, DrQ-v2 and Dreamer-v3, consistently outperforming CURL and SAC.  

\begin{figure}[hbtp]
  \centering
  \includegraphics[width=1.0\textwidth]{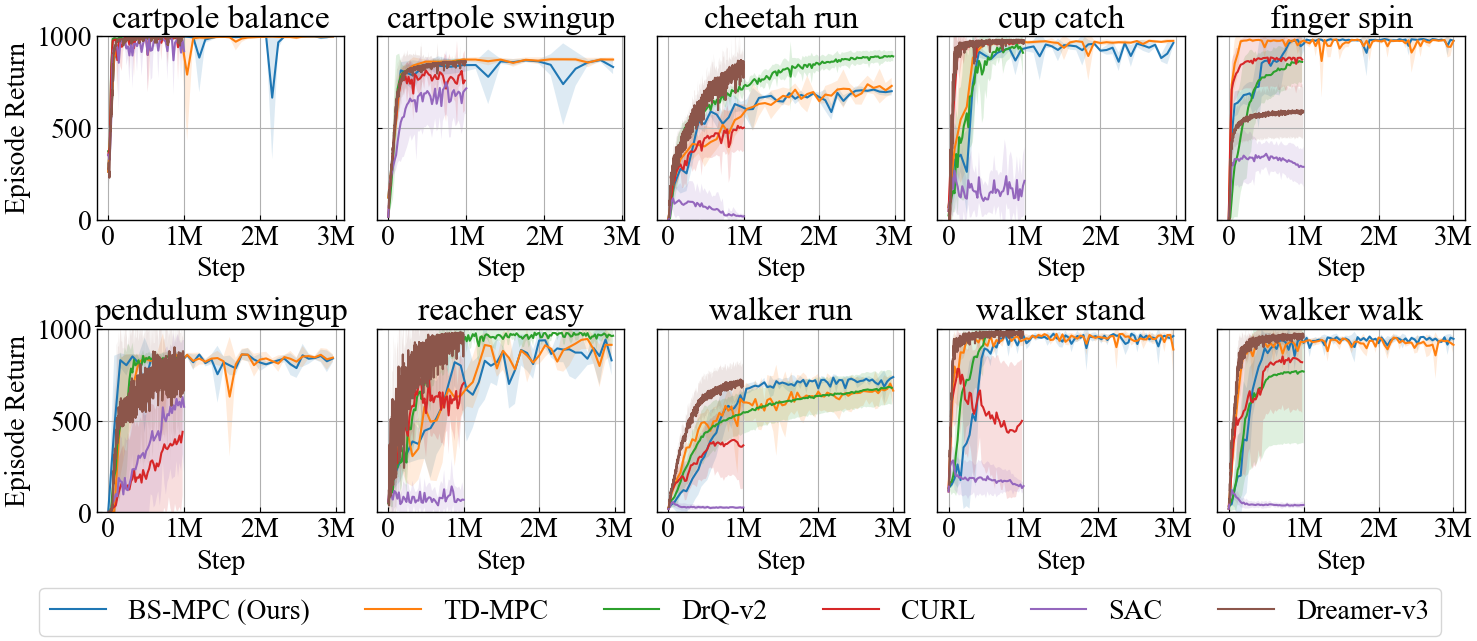}
  \caption{Performance comparison on 10 DM Control image-based tasks. At each evaluation step, the episode return is computed over 10 episodes. The results are averaged over 3 seeds, with shaded regions representing the standard deviation. Results for DrQ-v2 are obtained from their official results, and results for CURL, SAC and Dreamer-v3 are obtained from Dreamer-v3 code~\citep{dreamer-v3}.}
  \label{fig:result-main-image-input}
\end{figure}

\subsection{Resutls on Image input with distractions} \label{sec:image-with-distractors}
Finally, we evaluate the robustness of the proposed method in the presence of distracting information. The goal of this experiment is to show that BS-MPC has better resilience against irrelevant data in the input. We benchmark BS-MPC on 5 DM Control tasks by introducing irrelevant information into the input as noise. Following \citep{Zhang2018NaturalEB, zhang2021learning}, driving videos from the Kinetics dataset \citep{KineticDataset} are used as background for the original images. In this experiment, the same parameters and architecture as in Section~\ref{sec:experiment-image-input} are employed, and the performance of BS-MPC is compared to that of TD-MPC.

Figure~\ref{fig:result-distractor} shows the experimental results, which reveal that BS-MPC constantly outperforms TD-MPC in every environment. Since TD-MPC does not have an explicit objective function for its encoder, its encoder simply learns representation space to keep the latent dynamics consistent. Therefore, its encoder struggles to filter out the noise during training.
BS-MPC, however, learns its encoder by minimizing the bisimulation metric loss to retain bisimulation information in the learned representation space. This architectural modification enhances performance and increases robustness to noise compared to TD-MPC.

\begin{figure}[hbtp]
  \centering
  \includegraphics[width=1.0\textwidth]{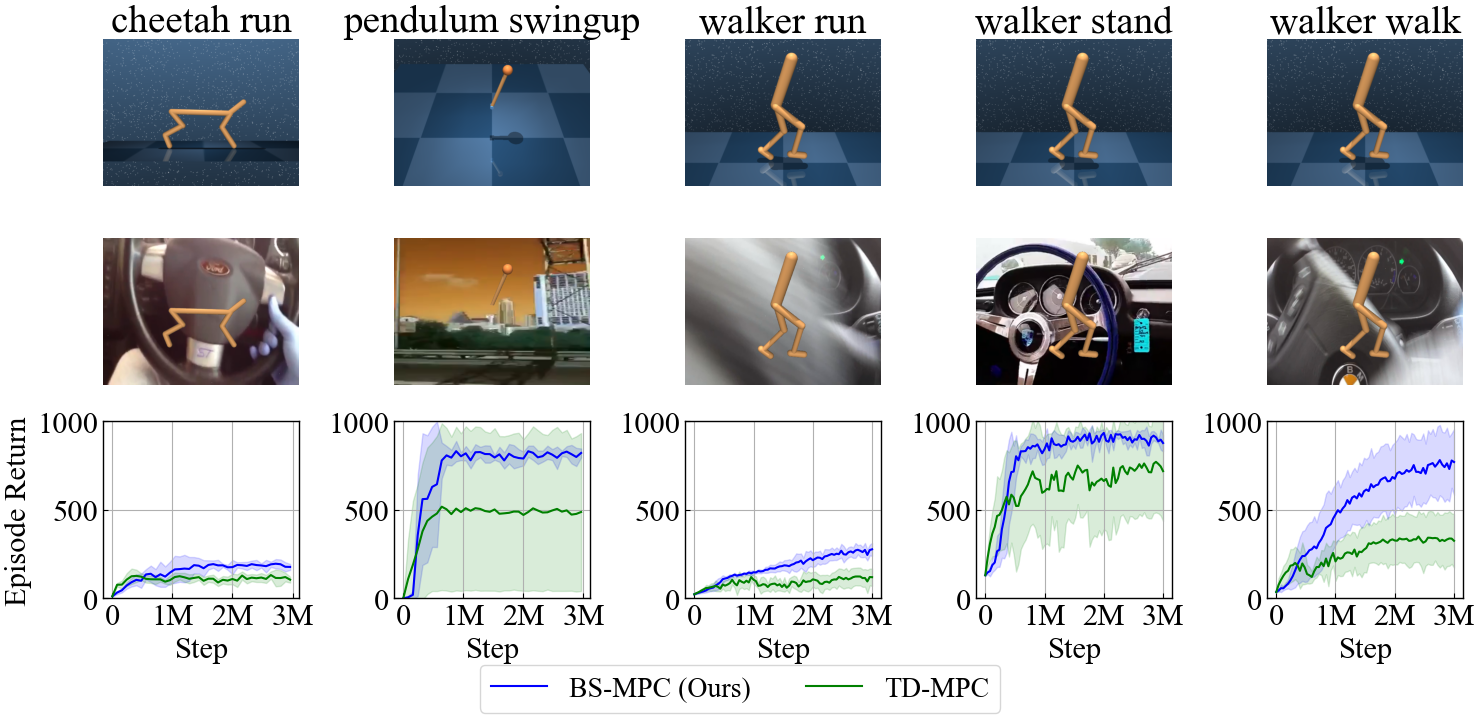}
  \caption{Performance comparison on 5 DM Control image-based tasks with distracted information from Kinetics dataset. At each evaluation step, the episode return is computed over 10 episodes. The results are averaged over 5 seeds, with shaded regions representing the standard deviation. (Top) Original Image. (Middle) Distracted Image. (Bottom) Performance results. BS-MPC constantly outperforms TD-MPC when the input is disturbed.}
  \label{fig:result-distractor}
\end{figure}

\section{Conclusion} \label{sec:conclusion}
In this paper, we propose a novel model-based reinforcement learning method called Bisimulation Metric for Model Predictive Control (BS-MPC). While inheriting several properties from TD-MPC, our approach differentiates it from the previous method in three key areas: inclusion of explicit encoder loss term, adaptation of bisimulation metric, and parallelizing the computational flow.  These improvements stabilize the learning process and enhance the model's robustness to noise while reducing the training time. Experimental results on continuous control tasks from DM Control demonstrate that BS-MPC has superior stability and robustness, whereas TD-MPC and other baselines fail to achieve comparable performance.

\paragraph{Limitations.} Despite the theoretical foundations and experimental results supporting BS-MPC, it has one notable limitation: the need for extensive parameter tuning of $c_4$ across different environments. In this paper, we employ a grid search to identify the optimal parameter values; however, future research should focus on developing methods for automatic parameter adjustment.

\bibliography{iclr2025_conference}
\bibliographystyle{iclr2025_conference}


\newpage
\appendix
\section{Proof of Theorem~\ref{thm:cumulative-reward-bound}} \label{appendix:proof}
Our proof uses the following Lemma.
\setcounter{lem}{0}
\begin{lem} \label{cor:reward-bound}
    Assume action $a$ is sampled from optimal policy $a \sim \pi^*(\cdot|s)$.
    With the same assumption in Theorem~\ref{thm:optimal-value-bound}, the difference between $R(s, a)$ and $R(\phi(s), a)$ under the $\pi^*$-bisimulation metric has the following upper bound.
    \begin{equation}
        (1-c)|R(s, a) - R(\phi(s), a) | \leq 2 \epsilon
    \end{equation}

    \begin{proof}
        From Theorem~\ref{thm:fixed-point-on-policy-bisimulation}, the fixed point $\tilde{d}$ satisfies
        \begin{equation}
            \tilde{d}(s, \phi(s)) = (1-c) |R(s, a) - R(\phi(s), a)| + c W_p(d)(\mathcal{P}^{\pi^*}(\cdot | s_i), \mathcal{P}^{\pi^*}(\cdot | s_j)).
        \end{equation}
        Since the second term is always positive, we can get
        \begin{equation}
        \begin{split}
            (1-c) |R(s, a) - R(\phi(s), a)| &\leq \tilde{d}(s, \phi(s)) \\
            & \leq 2 \epsilon
        \end{split}
        \end{equation}
    \end{proof}
\end{lem}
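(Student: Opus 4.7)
The plan is to reduce the lemma directly to Theorem~\ref{thm:fixed-point-on-policy-bisimulation} combined with the $\epsilon$-clustering structure used in Theorem~\ref{thm:optimal-value-bound}. Specifically, I would instantiate the fixed-point characterization at the pair $(s, \phi(s))$ to obtain
\begin{equation*}
\tilde{d}(s, \phi(s)) = (1-c)\,|r_s^{\pi^*} - r_{\phi(s)}^{\pi^*}| + c\,W_p(\tilde{d})\bigl(\mathcal{P}^{\pi^*}(\cdot\mid s),\,\mathcal{P}^{\pi^*}(\cdot\mid \phi(s))\bigr).
\end{equation*}
Since the Wasserstein term is nonnegative, the reward contribution is immediately sandwiched: $(1-c)\,|r_s^{\pi^*} - r_{\phi(s)}^{\pi^*}| \leq \tilde{d}(s, \phi(s))$. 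This is the only place the Wasserstein machinery is used in the proof.

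Next I would bound $\tilde{d}(s, \phi(s))$ via the clustering construction from Theorem~\ref{thm:optimal-value-bound}. The latent space is partitioned into $\epsilon$-balls (with respect to $\tilde{d}$) chosen so that $\frac{1}{n} < (1-c)\epsilon$, and $\phi$ sends each original state to the representative of its cluster. Thus $s$ and $\phi(s)$ lie in a common $\epsilon$-neighborhood, so by the triangle inequality $\tilde{d}(s, \phi(s)) \leq 2\epsilon$ (a cluster of radius $\epsilon$ has diameter at most $2\epsilon$). Chaining with the previous display yields $(1-c)\,|r_s^{\pi^*} - r_{\phi(s)}^{\pi^*}| \leq 2\epsilon$.

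The final step is to translate between the policy-averaged reward $r_s^{\pi^*}$ appearing in the fixed-point equation and the per-action reward $R(s,a)$ appearing in the lemma statement. Since $a \sim \pi^*(\cdot\mid s)$, and under the standard convention that the optimal policy in this MDP is deterministic (so $\pi^*$ places unit mass on a single action), $r_s^{\pi^*} = R(s, \pi^*(s)) = R(s, a)$; the same action $a$ is used at $\phi(s)$ by the phrasing of the lemma, so $r_{\phi(s)}^{\pi^*} = R(\phi(s), a)$. Substituting into the previous display delivers the stated bound.

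The main obstacle is precisely this last identification: Theorem~\ref{thm:fixed-point-on-policy-bisimulation} is formulated in terms of the policy-averaged rewards $r_s^{\pi^*}$, whereas the lemma is written in terms of $R(s,a)$ for a specific sampled action. For deterministic $\pi^*$ the identification is clean, but for a stochastic optimal policy one must either take an expectation over $a$ or impose an additional assumption that the same $a$ is used at both $s$ and $\phi(s)$; it is worth flagging this convention explicitly. A secondary but minor technical point is justifying the factor $2$ in $\tilde{d}(s,\phi(s)) \leq 2\epsilon$ rather than $\epsilon$: this depends on whether $\phi(s)$ is itself treated as a state within the cluster or as an abstract representative, and on whether $\epsilon$ is interpreted as a radius or a diameter of each cluster.
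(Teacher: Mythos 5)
Your proposal follows essentially the same route as the paper's proof: instantiate the fixed-point equation of Theorem~\ref{thm:fixed-point-on-policy-bisimulation} at the pair $(s,\phi(s))$, drop the nonnegative Wasserstein term, and bound $\tilde{d}(s,\phi(s)) \leq 2\epsilon$ via the $\epsilon$-cluster structure of Theorem~\ref{thm:optimal-value-bound}. You are in fact more careful than the paper, which silently writes the per-action reward $R(s,a)$ in place of the policy-averaged reward $r_s^{\pi^*}$ appearing in the fixed-point equation and asserts $\tilde{d}(s,\phi(s)) \leq 2\epsilon$ without justification --- precisely the two points you flag and resolve.
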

    
\setcounter{thm}{2}
\begin{thm} 
    Consider a trajectory $\tau = (s_0, a_0, s_1, a_1, \dots, a_{H-1}, s_H)$ in the original state space $\gS$, and its corresponding encoded trajectory $\phi(\tau) = (z_0, a_0, z_1, a_1, \dots, a_{H-1}, z_H)$, where $a_k \sim \pi^*(\cdot | s_k)$ and $z_k = \phi(s_k)$, with $\phi$ defined in Theorem~\ref{thm:optimal-value-bound}. 
    Under the assumption that both the reward model and dynamics model have no approximation error, the cumulative rewards $\mathbf{S}(\tau) = \mathbb{E}_{\tau} \left[ \gamma^H V^*(s_{H}) + \sum_{h=0}^{H-1} \gamma^h R(s_h, a_h) \right]$ and $\mathbf{S}(\phi(\tau)) = \mathbb{E}_{\tau} \left[ \gamma^H V^*(\phi(s_{H})) + \sum_{h=0}^{H-1} \gamma^h R(\phi(s_h), a_h) \right]$ can be bounded as follows.
    \begin{equation}
        | \mathbf{S}(\tau) - \mathbf{S}(\phi(\tau)) | \leq \frac{2\gamma^H (\epsilon + \gL)}{(1 - \gamma)(1 - c)} + \frac{2\epsilon (1-\gamma^{H-1})}{(1-\gamma)(1-c)} 
    \end{equation}
\end{thm}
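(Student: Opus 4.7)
The plan is to reduce the trajectory-level bound to two kinds of pointwise bounds that are already available: a terminal value-function bound from Theorem~\ref{thm:optimal-value-bound}, and a per-step reward bound from Lemma~\ref{cor:reward-bound}. Because both $\mathbf{S}(\tau)$ and $\mathbf{S}(\phi(\tau))$ are expectations taken over the \emph{same} trajectory distribution (actions $a_h \sim \pi^*(\cdot\mid s_h)$ from the original MDP), the encoder $\phi$ only affects the integrand; no change-of-measure argument is needed.

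First I would bring the difference inside the expectation by linearity and then apply the triangle inequality term-by-term:
\begin{equation*}
\bigl|\mathbf{S}(\tau) - \mathbf{S}(\phi(\tau))\bigr|
\leq \mathbb{E}_{\tau}\!\left[\gamma^H \bigl|V^*(s_H) - V^*(\phi(s_H))\bigr| + \sum_{h=0}^{H-1} \gamma^h \bigl|R(s_h, a_h) - R(\phi(s_h), a_h)\bigr|\right].
\end{equation*}
Next I would apply Theorem~\ref{thm:optimal-value-bound} to the terminal term, producing the pointwise bound $\frac{2(\epsilon + \mathcal{L})}{(1-\gamma)(1-c)}$ scaled by $\gamma^H$; this contributes the first summand in the target inequality. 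Then I would apply Lemma~\ref{cor:reward-bound} to each reward discrepancy, obtaining $|R(s_h,a_h) - R(\phi(s_h),a_h)| \leq \frac{2\epsilon}{1-c}$ uniformly in $h$. Summing the finite geometric series $\sum_{h=0}^{H-1}\gamma^h = \frac{1-\gamma^H}{1-\gamma}$ yields the second summand $\frac{2\epsilon(1-\gamma^H)}{(1-\gamma)(1-c)}$. Since the bounds are uniform in the trajectory, the expectation disappears.

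There is essentially no obstacle here, since the heavy lifting was done in the fixed-point construction of $\tilde{d}$ (Theorem~\ref{thm:fixed-point-on-policy-bisimulation}) and in Theorem~\ref{thm:optimal-value-bound}. The only subtlety worth flagging is the requirement that the same action sequence $(a_0,\dots,a_{H-1})$ be fed through both MDPs so that Lemma~\ref{cor:reward-bound} applies with a common $a_h$ in both reward evaluations; this is guaranteed by the way the encoded trajectory $\phi(\tau)$ is defined in the statement. I would also quickly note that Lemma~\ref{cor:reward-bound} implicitly relies on $\pi^*$ being the converged policy from Theorem~\ref{thm:fixed-point-on-policy-bisimulation}, which is exactly the hypothesis we have inherited.
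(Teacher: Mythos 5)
Your proof is correct and follows essentially the same route as the paper's own proof: linearity of expectation, triangle inequality plus Jensen's inequality, Theorem~\ref{thm:optimal-value-bound} for the terminal value term, Lemma~\ref{cor:reward-bound} for each per-step reward discrepancy, and summation of the geometric series. One minor remark: your series correctly evaluates to $\sum_{h=0}^{H-1}\gamma^h = (1-\gamma^{H})/(1-\gamma)$, giving a second summand with $1-\gamma^{H}$ as in the main-text statement of the theorem; the appendix version's $1-\gamma^{H-1}$ is a typo (and the paper's own final inequality step with that exponent actually points the wrong way), so your version is the one to keep.
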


\begin{proof}
    Simply calculating the difference between $\mathbf{S}(\tau)$ and $\mathbf{S}(\phi(\tau)$
    \begin{equation}
    \begin{split}
    \left| \mathbf{S}(\tau) - \mathbf{S}(\phi(\tau)) \right| & = \left| \mathbb{E}_{\tau} \left[ \gamma^H \left( V^*(s_{H}) - V^*(\phi(s_{H})) \right) + \sum_{h=0}^{H-1} \gamma^h (R(s_h, a_h) - R(\phi(s_h), a_h)) \right] \right| \\
    & = \left| \mathbb{E}_{\tau} \left[ \gamma^H \left( V^*(s_{H}) - V^*(\phi(s_{H})) \right) \right] + \mathbb{E}_{\tau}\left[\sum_{h=0}^{H-1} \gamma^h (R(s_h, a_h) - R(\phi(s_h), a_h)) \right] \right| \\
    & \leq \left \lvert \mathbb{E}_{\tau} \left[ \gamma^H \left( V^*(s_{H}) - V^*(\phi(s_{H})) \right) \right] \right \rvert + \left| \mathbb{E}_{\tau}\left[\sum_{h=0}^{H-1} \gamma^h (R(s_h, a_h) - R(\phi(s_h), a_h)) \right] \right| \\
    & \hspace{8cm} \text{(Triangle inequality)} \\
    & \leq \mathbb{E}_{\tau} \left[ \gamma^H \left\lvert V^*(s_{H}) - V^*(\phi(s_{H})) \right\rvert \right] +  \mathbb{E}_{\tau}\left[\sum_{h=0}^{H-1} \gamma^h \left \lvert R(s_h, a_h) - R(\phi(s_h), a_h) \right \rvert \right] \\
    & \hspace{8cm} \text{(Jensen's inequality)} \\
    & = \gamma^H \mathbb{E}_{\tau} \left[ \left\lvert V^*(s_{H}) - V^*(\phi(s_{H})) \right\rvert \right] +  \sum_{h=0}^{H-1} \gamma^h \mathbb{E}_{\tau}\left[ \left \lvert R(s_h, a_h) - R(\phi(s_h), a_h) \right \rvert \right] \\
    & \leq \frac{2\gamma^H (\epsilon + \gL)}{(1 - \gamma)(1 - c)} + \frac{2\epsilon}{1-c} \sum_{h=0}^{H-1} \gamma^h \quad \text{(From Theorem~\ref{thm:optimal-value-bound} and Lemma~\ref{cor:reward-bound})} \\
    & \leq \frac{2\gamma^H (\epsilon + \gL)}{(1 - \gamma)(1 - c)} + \frac{2\epsilon (1-\gamma^{H-1})}{(1-\gamma)(1-c)} 
    \end{split}
    \end{equation}
\end{proof}

\section{Additional Experimental Results} \label{appendix:additional_experimental_results}
\subsection{All state-based tasks result}
\begin{figure}[ht]
  \centering
  \includegraphics[width=0.95\textwidth]{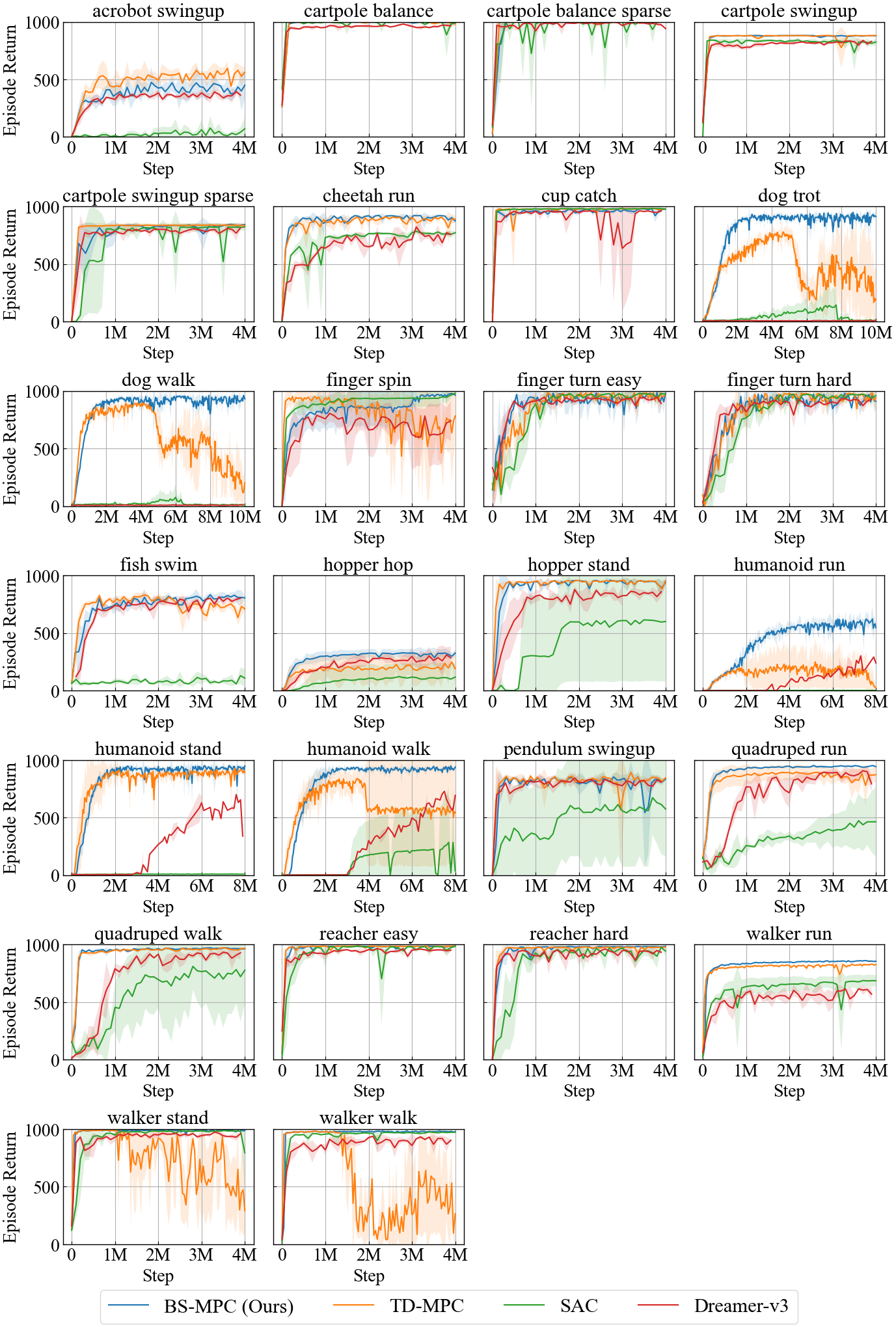}
  \caption{State-based tasks result from DMControl Suite. Performance comparison on 26 DM Control tasks with state input. At each evaluation step, the episode return is computed over 10 episodes. The results are averaged over 3 seeds, with shaded regions representing the standard deviation.}
  \label{fig:result-all-state-input}
\end{figure}

\subsection{Computational Time}
Table.~\ref{tab:experiment-computational-time} shows the computational time of BS-MPC and TD-MPC. We use RTX-4090 for our experiments.

\begin{table}[htbp]
\centering
\caption{Computational time between BS-MPC and TD-MPC in state-based tasks. The table shows how many hours the training takes.}
\label{tab:experiment-computational-time}
\begin{tabular}{lllll}
\toprule
  & \textbf{Cartpole-Swingup} & \textbf{Cheetah-run} & \textbf{Finger-Spin} & \textbf{Walker-run} \\
\midrule
BS-MPC & \textbf{2.0} & \textbf{4.0} & \textbf{8.3} & \textbf{8.3} \\
TD-MPC & 2.4 & 4.8 & 10.0 & 10.1 \\
\bottomrule
\end{tabular}
\end{table}

\subsection{Detailed analysis for the failure of TD-MPC}
In this section, we analyze the reason why TD-MPC failed to achieve the same performance as BS-MPC in our experiments. First, we look at the losses of both BS-MPC and TD-MPC in the Humanoid-Walk environment. Fig~\ref{fig:humanoid-walk-loss-comparison} shows the average value of each loss over the batches. From this image, it can be observed that the consistency loss in TD-MPC gradually increases and diverges. In contrast, BS-MPC has much smaller values for every component, resulting in more stable performance. One possible cause is that TD-MPC fails to learn the encoder, leading to large errors, which in turn result in the failure to train the latent dynamics model properly. BS-MPC, on the contrary, has explicit encoder loss in its objective function, thus enabling it to actively update the encoder. Fig~\ref{fig:humanoid-walk-q-comparison} shows the learned Q values and gradient norm of the objective function.  As it shows, TD-MPC is vulnerable to exploding gradients, which leads to the divergence of the loss. Moreover, the learned Q values drop significantly when the gradient norm explodes. 

\begin{figure}[ht]
  \centering
  \includegraphics[width=0.95\textwidth]{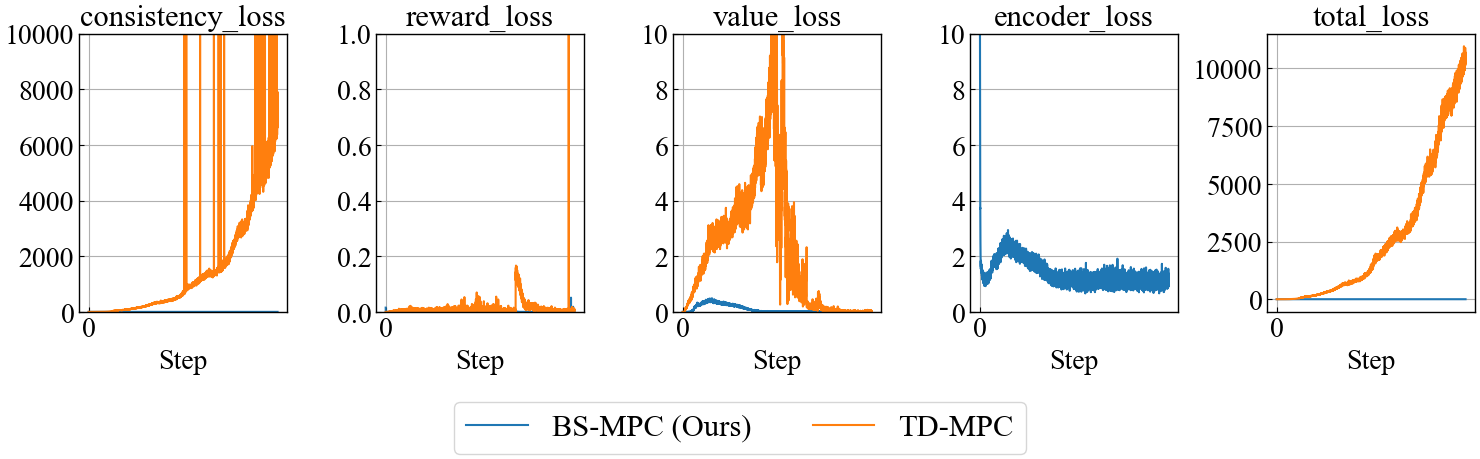}
  \caption{Comparative analysis of loss components between BS-MPC and TD-MPC across training steps in Humanoid-walk environment. Each graph presents different loss types—consistency loss, reward loss, value loss, encoder loss, and total loss—plotted against training steps. Note that TD-MPC does not have encoder loss, and it only exists in BS-MPC. See Eq.~\ref{eq:bsmpc-model-loss-function} for more details.}
  \label{fig:humanoid-walk-loss-comparison}
\end{figure}

\begin{figure}[ht]
  \centering
  \includegraphics[scale=0.5]{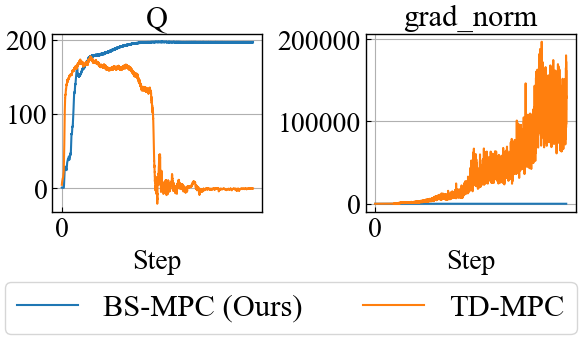}
  \caption{Average values of the learned Q functions and gradient norm of the loss function between BS-MPC and TD-MPC across training steps in Humanoid-walk environment.}
  \label{fig:humanoid-walk-q-comparison}
\end{figure}

\section{BS-MPC training algorithm flow} \label{appendix:bsmpc-training-flow}
The training algorithm flow is described in Algorithm.~\ref{alg:bs-mpc}.

\begin{figure}[ht]
\begin{algorithm}[H]
\caption{BS-MPC (Model training)} \label{alg:bs-mpc}
\textbf{Require:} $\theta=[\theta^h, \theta^R, \theta^Q, \theta^d], \psi$: randomly initialized network parameters, \\
\hspace*{1.3cm} Episode Length $L$, Number of parameter update $K$, Buffer $\mathcal{B}$ \\
\begin{algorithmic}[1]
\WHILE{the training is not complete}
    \STATE \textit{// Collect episode}
    \FOR{$t=0 \dots L$}
        \STATE $a_t \sim \textit{BSMPC}(h_{\theta^h}(s_t))$ \COMMENT{Compute action with BS-MPC}
        \STATE $(s_{t+1}, r_t) \sim \gP(s_t, a_t), \mathcal{R}(s_t, a_t)$ \COMMENT{Execute action against the environment}
        \STATE $\mathcal{B} \leftarrow \mathcal{B} \cup (s_t, a_t, r_t, s_{t+1})$ \COMMENT{Add to buffer}
    \ENDFOR
    \STATE \textit{// Update model parameters}
    \STATE $\theta_0 = \theta$ \COMMENT{Initialize $\theta_0$ with current parameter}
    \FOR{$k=0 \dots K$}
        \STATE   $\{s_{t:t+H+1}, a_{t:t+H}, r_{t:t+H}\} \sim \mathcal{B}$ \COMMENT{Sample a trajectory with horizon $H$ from the buffer $\mathcal{B}$}
        \STATE $z_{t:t+H+1} = h_{\theta^h_k}(s_{t:t+H+1})$ \COMMENT{Encode all observations with online encoder}
        \STATE $\hat{r}_{t:t+H} = R_{\theta^R_k}(z_{t:t+H}, a_{t:t+H})$ \COMMENT{Estimated rewards}
        \STATE $\hat{Q}_{t:t+H} = Q_{\theta^Q_k}(z_{t:t+H}, a_{t:t+H})$ \COMMENT{Estimated state-action value}
        \STATE $\hat{z}_{t+1:t+H+1} = d_{\theta^d_k}(z_{t:t+H}, a_{t:t+H})$ \COMMENT{Estimated next latent state}
        \STATE $\theta_{k+1} = \arg\min_{\theta_k} \gL(\theta_k)$ \COMMENT{Update $\theta_k$ by minimizing Eq.~\ref{eq:bs-mpc-loss-function}}
        \STATE $\psi_{k+1} = \arg \min_{\psi_k} \gJ_{\pi}(\psi_k)$ \COMMENT{Update $\psi_k$ by minimizing Eq.~\ref{eq:bs-mpc-policy-loss}}
    \ENDFOR
    \STATE $\theta = \theta_{K+1}$ \COMMENT{Update current parameter}
\ENDWHILE
\end{algorithmic}
\end{algorithm}
\end{figure}

\section{BS-MPC vs TD-MPC2} \label{appendix:tdmpc2}
TD-MPC2~\citep{tdmpc2} is the successor of TD-MPC~\cite{tdmpc}, and it modifies the architecture by increasing the model size and adapting distributional reinforcement learning to its cost function. Although it fixes some open problems in TD-MPC, it still suffers from long calculation time due to the model size expansion. Moreover, it has less theoretical background to support the encoder validity.

We compare BS-MPC with TD-MPC2 on both state-based and image-based tasks from the DMControl suite. Figure~\ref{fig:tdmpc2-state-tasks} shows the results for state-based tasks. Both BS-MPC and TD-MPC2 resolve the issue of performance divergence observed in TD-MPC; however, TD-MPC2 requires significantly more parameters and employs more complex model architectures. 
Fig~\ref{fig:tdmpc2-image-tasks} shows the results for image-based tasks. Although TD-MPC2 converges faster than BS-MPC in some tasks (e.g., quadruped-run and walker-run), BS-MPC achieves comparable or slightly better overall performance. 
In these experiments, we use the same number of parameters and model architecture as in TD-MPC.

\begin{figure}[hbtp]
  \centering
  \includegraphics[width=0.95\textwidth]{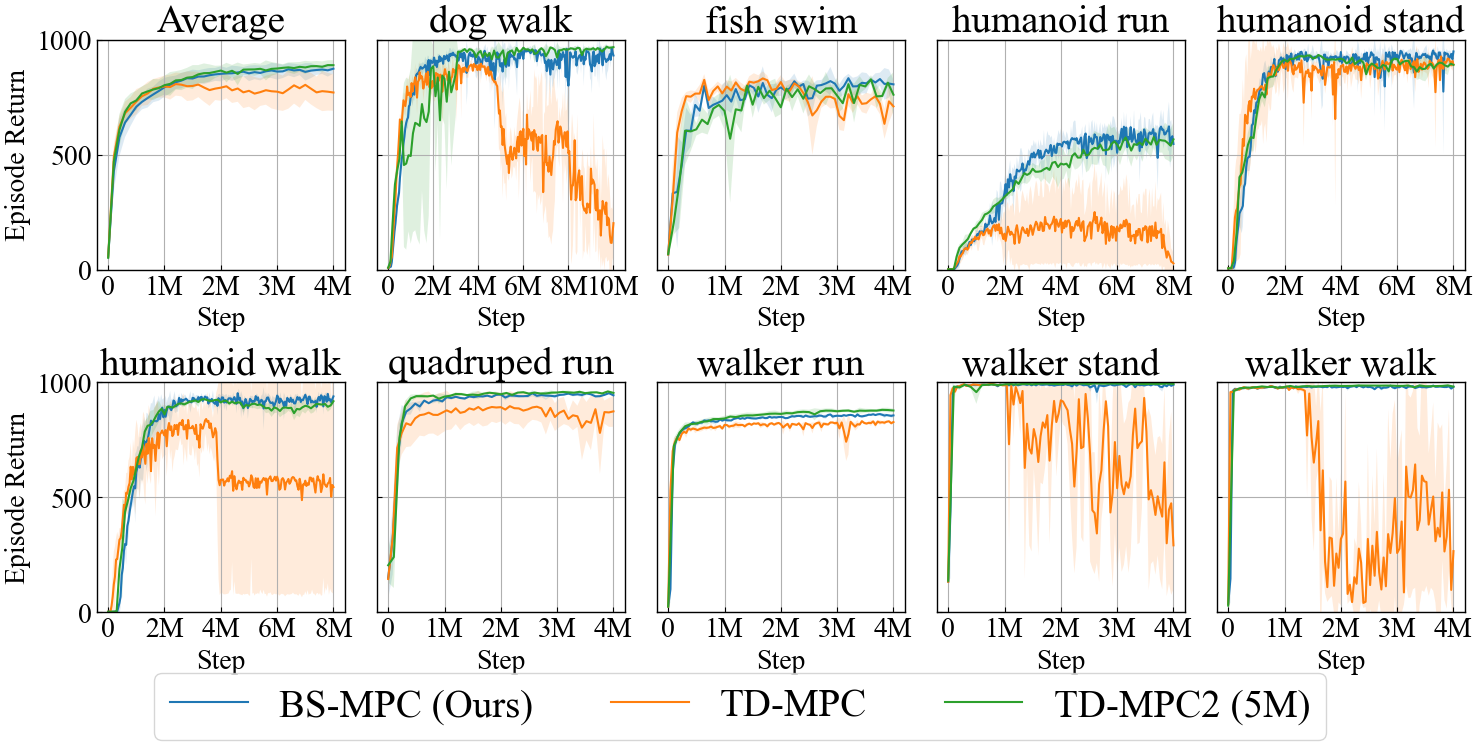}
  \caption{Comparison with TD-MPC2 in state-based tasks from DMControl Suite. Results for TD-MPC2 are obtained from their official repository. The results are averaged over 3 seeds, with shaded regions representing the standard deviation. The first picture shows the average performance across 26 continuous control tasks from DM Control environment.}
  \label{fig:tdmpc2-state-tasks}
\end{figure}

\begin{figure}[hbtp]
  \centering
  \includegraphics[width=0.95\textwidth]{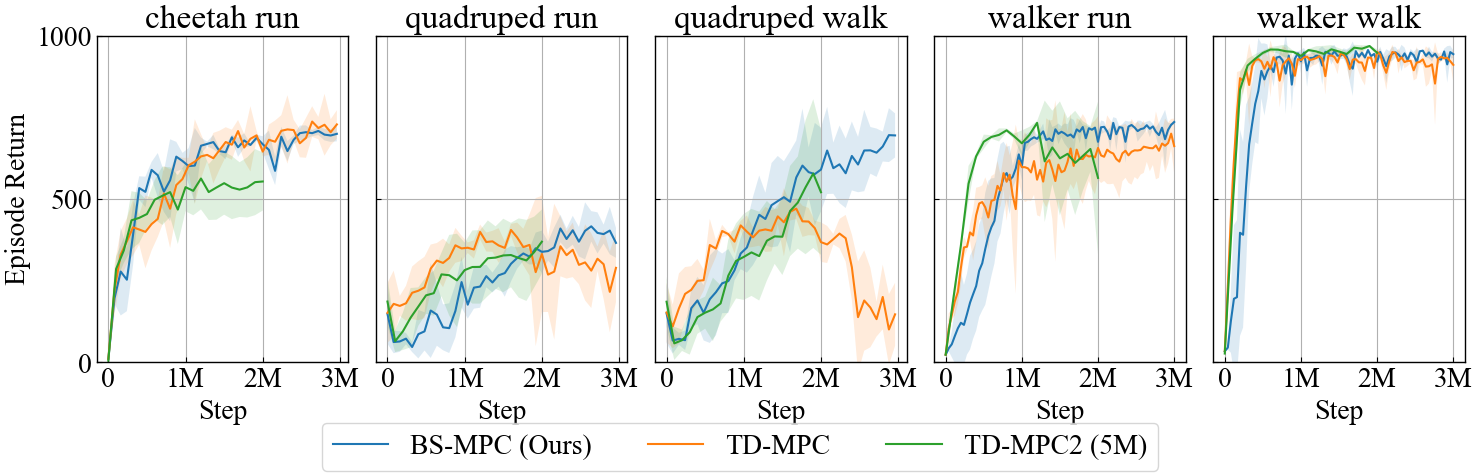}
  \caption{Comparison with TD-MPC2 in image-based tasks from DMControl Suite. Results for TD-MPC2 are obtained from their official repository. The results are averaged over 3 seeds, with shaded regions representing the standard deviation.}
  \label{fig:tdmpc2-image-tasks}
\end{figure}

\section{Implementation Details} \label{appendix:implementation_details}
Here we give details about the hyper-parameters and model architectures.

\subsection{Hyperparameters}
\textbf{Shared Parameters}: First, we outline the parameters that are common to both TD-MPC and BS-MPC. They are described in Table.~\ref{tab:experiment-hyperparameters}.

\begin{table}[ht]
\centering
\caption{Hyperparameters used for TD-MPC and BS-MPC in the experiment.}
\label{tab:experiment-hyperparameters}
\begin{tabular}{ll}
\toprule
\textbf{Hyperparameter}        & \textbf{Value}       \\
\midrule
Discount factor ($\gamma$)      & 0.99                 \\
Seed steps                     & 5,000                \\
Replay buffer size              & 1,000,000 (state-based tasks) \\
                                & 100,000 (image-based tasks) \\
Sampling technique              & Uniform Sampling \\
Planning horizon ($H$)          & 5                    \\
Initial parameters ($\mu^0, \sigma^0$) & (0, 2)        \\
Population size                 & 512                  \\
Elite fraction                  & 64                   \\
MPPI Update Iterations          & 12 (Humanoid, Dog)        \\
                                & 6 (otherwise)        \\
Policy fraction                 & 5\%                  \\
Number of particles             & 1                    \\
Temperature ($\tau$)            & 0.5                  \\
Latent dimension                & 100 (Humanoid, Dog)  \\
                                & 50 (otherwise)       \\
Learning rate                   & 3e-4 (pixels)   \\
                                & 1e-3 (otherwise)     \\
Optimizer ($\theta$)            & Adam ($\beta_1 = 0.9, \beta_2 = 0.999$) \\
Temporal coefficient ($\lambda$) & 0.5                 \\
Reward loss coefficient ($c_1$) & 0.5                  \\
Value loss coefficient ($c_2$)  & 0.1                  \\
Consistency loss coefficient ($c_3$) & 0.5             \\
Exploration schedule ($\epsilon$) & 0.5 $\to$ 0.05 (25k steps) \\
Planning horizon schedule       & 1 $\to$ 5 (25k steps) \\
Batch size                      & 512 (State-based tasks) \\
                                & 256 (Image-based tasks) \\
Momentum coefficient ($\zeta$)  & 0.99                 \\
Steps per gradient update       & 1                    \\
Target parameter $\bar{\theta}$ update frequency     & 2                    \\
\bottomrule
\end{tabular}
\end{table}

\textbf{BS-MPC specific parameters}: Next, we list the parameter that is used for tuning the weight for bisimulation metric loss $(c_4)$. 
We change the value based on the environment and tune the weighting coefficient $c_4$ across ${10^{-8}, 0.0001, 0.001, 0.01, 0.1, 0.5}$ with grid search.
All of the numbers are listed in Table.~\ref{tab:experiment-bsmpc-hyperparameters}.

\begin{table}[htbp]
\centering
\caption{Bisimulation metric parameter used in the experiment.}
\label{tab:experiment-bsmpc-hyperparameters}
\begin{tabular}{ll}
\toprule
\textbf{Environment}        & \textbf{Value} ($c_4$)      \\
\midrule
Acrobot  & 0.0001               \\
Cartpole & 0.5                \\
Cheetah  & 0.001                \\
Cup      & 0.5                 \\
Finger   & 0.001                \\
Fish     & 0.001                 \\
Hopper   & 0.1                \\
Humanoid & 0.001                 \\
Pendulum & 0.01                 \\
Quadruped & 0.1                \\
Reacher & 0.01                \\
Walker   & 0.001                 \\
Dog & $10^{-8}$                 \\
\bottomrule
\end{tabular}
\end{table}

\subsection{Model Architecture}
In our experiments, both BS-MPC and TD-MPC utilize the same model architecture and the number of trainable parameters. We employ multi-layer perceptrons (MLPs) to represent the underlying environment models $\gP$ and $\gR$, the state-action value function $Q$, and the policy $\pi$. The architecture details are shown in Table.~\ref{tab:experiment-model-architecture}. More details can be found in our official code.

\begin{table}[ht]
\centering
\caption{Model Architecture used in the experiment.}
\label{tab:experiment-model-architecture}
\begin{tabular}{llll}
\toprule
Models        &   Number of Layers & Hidden Dim & Activation     \\
\midrule
Latent Model Dynamics $\gP$ & 3 & 512 & ELU \\
Reward Model          $\gR$ & 3 & 512 & ELU \\
State-action value function $Q$ & 3 & 512 & ELU + LayerNorm \\
Policy $\pi$ & 3 & 512 & ELU \\
\bottomrule
\end{tabular}
\end{table}

\end{document}